\DeclareRobustCommand\onedot{\futurelet\@let@token\@onedot}
\def\@onedot{\ifx\@let@token.\else.\null\fi\xspace}
\def\ie{{i.e}\onedot}
\DeclareMathOperator*{\subject}{s.t.}
\DeclareMathOperator*{\rank}{rank}
\DeclareMathOperator*{\sgn}{sgn}
\DeclareMathOperator*{\for}{for}
\newtheorem{innercustomthm}{Theorem}
\begin{document}
%
\title{Side Information for Face Completion: a Robust PCA Approach}
%
%
%
%

\author{Niannan~Xue,~\IEEEmembership{Student~Member,~IEEE,}
Jiankang~Deng,~\IEEEmembership{Student~Member,IEEE,} Shiyang~Cheng,~\IEEEmembership{Student~Member,IEEE,} Yannis~Panagakis,~\IEEEmembership{Member,IEEE,}
        and~Stefanos~Zafeiriou,~\IEEEmembership{Member,~IEEE}
\IEEEcompsocitemizethanks{\IEEEcompsocthanksitem N. Xue, J. Deng, S. Cheng, Y. Panagakis and S. Zafeiriou are with the Department of Computing,
Imperial College London, UK\protect\\
Corresponding author: Jiankang Deng, E-mail: j.deng16@imperial.ac.uk
\IEEEcompsocthanksitem S. Zafeiriou is also with Center for Machine Vision and Signal Analysis, University of Oulu, Finland.}
\thanks{Manuscript received April 19, 2005; revised August 26, 2015.}}

%
%

\markboth{Journal of \LaTeX\ Class Files,~Vol.~14, No.~8, August~2015}%
{Shell \MakeLowercase{\textit{et al.}}: Bare Advanced Demo of IEEEtran.cls for IEEE Computer Society Journals}
%



\IEEEtitleabstractindextext{%
\begin{abstract}

Robust principal component analysis (RPCA) is a powerful method for learning low-rank feature representation of various visual data. However, for certain types as well as significant amount of error corruption, it fails to yield satisfactory results; a drawback that can be alleviated by exploiting   domain-dependent prior knowledge or information. In this paper, we propose two models for the RPCA that take into account  such side information, even in the presence of missing values. We apply this framework to the task of UV completion which is widely used in pose-invariant face recognition. Moreover, we construct a generative adversarial network (GAN) to extract side information as well as subspaces. These subspaces not only assist in the recovery but also speed up the process in case of large-scale data. We quantitatively and qualitatively evaluate the proposed approaches through both synthetic data and five real-world datasets to verify their effectiveness.
\end{abstract}

\begin{IEEEkeywords}
RPCA, GAN, side information, UV completion, face recognition, in the wild.
\end{IEEEkeywords}}

\maketitle

\IEEEdisplaynontitleabstractindextext

%
\IEEEpeerreviewmaketitle

\ifCLASSOPTIONcompsoc
\IEEEraisesectionheading{\section{Introduction}\label{sec:introduction}}
\else
\section{Introduction}
\label{sec:introduction}
\fi

\begin{figure*}[t]
    \centering
    \includegraphics[width=0.98\linewidth]{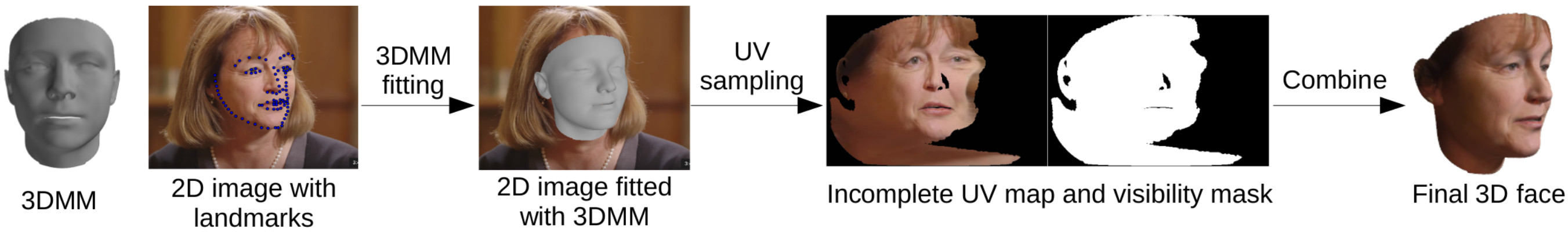}
    \caption{The procedure of getting the UV map from an arbitrary 2D image.}
    \label{fig:uv_demo}
\end{figure*}

\begin{figure}[b!]
\centering
\includegraphics[width=1\linewidth]{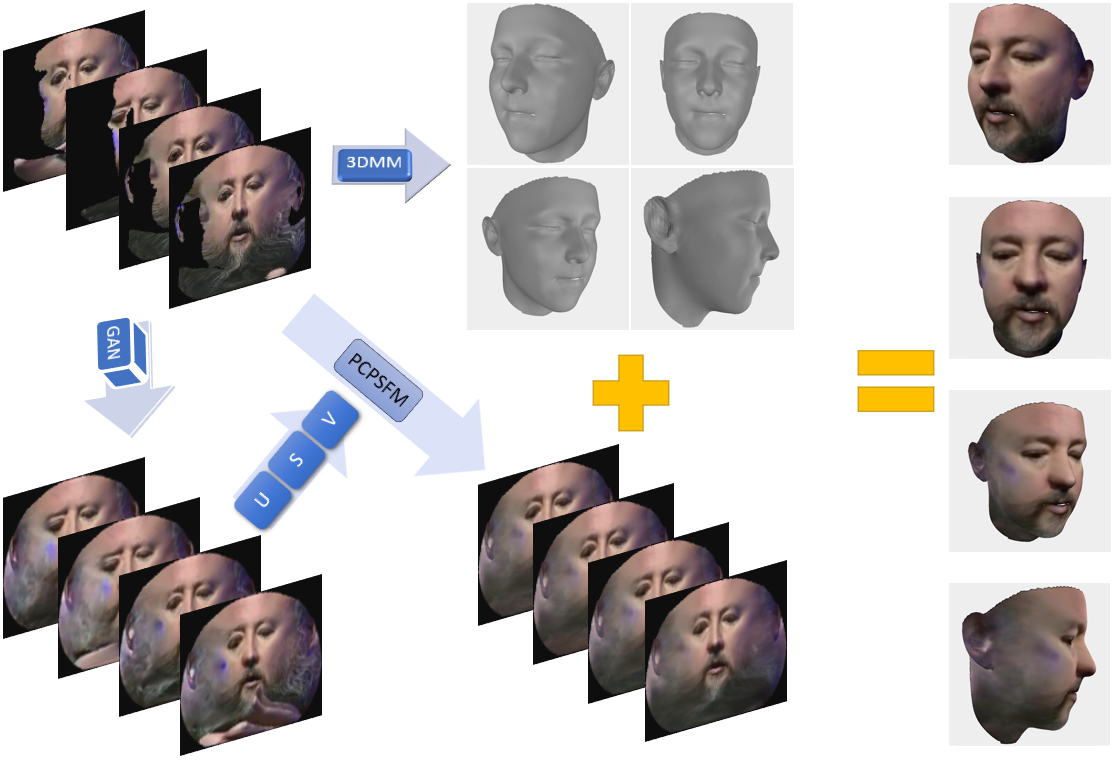}
\caption{Given an input sequence of incomplete UV maps, we extract the shape using 3DMM and perform preliminary completion using GAN. With the left subspace and side information provided by GAN, we then carry out PCPSFM to produce more refined completion results. After that, we attach the completed UV texture to the shape creating images at various poses for face recognition.}
\label{fig:diagram}
\end{figure}

\IEEEPARstart{U}{V} space embeds the manifold of a 3D face as a 2D contiguous atlas. Contiguous UV spaces are natural products of many 3D scanning devices and are often used for 3D Morphable Model (3DMM) construction~\cite{Booth14,3dmm_vetter,3dmm_revisited_PatelS09}. Although UV space by nature cannot be constructed from an arbitrary 2D image, a UV map may still be obtained by fitting a 3DMM to the image and sampling the corresponding texture~\cite{Booth17}. We illustrate this procedure in Figure~\ref{fig:uv_demo}. Unfortunately, due to the self occlusion of the face, those UV maps are always incomplete and are likely to miss facial parts that are informative. Once completed, this UV map, combined with the corresponding 3D face, is extremely useful, as it can be used to synthesise 2D faces of arbitrary poses. Then, we can probe image pairs of similar poses to improve recognition performance \cite{Cao17}. Hence, the success of pose-invariant face recognition relies on the quality of UV map completion.

Recovering UV maps from a sequence of related facial frames can addressed by employing  robust principal component analysis (RPCA) with missing data \cite{Shang14}. This is because self-occlusion at large poses leads to incomplete and missing data and imperfection in fitting leads to regional errors. Principal Component Pursuit (PCP) as proposed in \cite{Candes11,Chandrasekaran11} and its variants e.g., \cite{Aravkin14,Bao12,Cabral2013,Xu12,Zhou10} are popular algorithms to solve RPCA. PCP employs the nuclear norm and the $l_1$-norm (convex surrogates of the rank and sparsity constraints, respectively) in order to approximate the original $l_0$-norm regularised rank minimisation problem. Unfortunately, PCP operates in an isolated manner where domain-dependent prior knowledge\cite{Jiao15}, i.e., side information \cite{Ziv76}, is always ignored. Moreover, real-world visual data rarely satisfies the stringent assumptions imposed by PCP for exact recovery\cite{Candes08}. These call for a more powerful framework that can assimilate useful priors to alleviate the degenerate or suboptimal solutions of PCP.

It has already been shown that side information is propitious in the context of matrix completion \cite{Chiang15,Xu13} and compressed sensing \cite{Mota17}. Recently, \textit{noiseless} features have been capitalised on in the PCP framework\cite{KaiYang16,Sagonas14,liu10,liu17}. In particular, an error-free orthogonal column space was used to drive a person-specific facial deformable model\cite{Sagonas14}. And features also remove dependency on the row-coherence which is beneficial in the case of union of multiple subspaces\cite{liu10,liu17}. More generally, Chiang et al.\cite{KaiYang16} used both a column and a row space to recover only the weights of their interaction in a simpler problem. The main hindrance to the success of such methods is the need for a set of clean, noise-free data samples in order to determine the column and/or row spaces of the low-rank component. But there are no prescribed way to find them in practice.

On a separate note, rapid advances in neural networks for image inpainting offers an agglomeration of useful priors. Pathak et al.\cite{pathak2016} proposed context encoders with a re-construction and an adversarial loss to generate the contents for missing regions that comply with the neighbourhood. Yang et al.\cite{Yang2017} further improved inpainting with a multi-scale neural patch synthesis method. This approach is based on a joint optimisation of image content and texture constraints, which not only preserves contextual structures but also produces fine details. Li et al.\cite{Li2017} combined a reconstruction loss, two adversarial losses, and a semantic parsing loss to ensure genuineness and consistency of local-global contents. These methods are by no means definitive for the following reasons: (a) their masks are artificial and do not have semantical correspondence with a 3D face; (b) they do not allow missing regions over $50\%$ which is commonplace in our case.

This paper is based on our preliminary work\cite{Xue17} and extended to 1) the problem of UV completion and 2) to incorporate side information provided by generative adversarial networks. As such, we have extended PCP to take advantage of \textit{noisy} prior information aiming to realise better UV map reconstruction. We then perform pose-invariant face recognition experiments using the completed UV maps.
Experimental results indicate the superiority of our framework. The overall work flow is explicated in Fig.\ref{fig:diagram}. Our contributions are summarised as follows:
\begin{itemize}
    \item A novel convex program is proposed to use side information, which is a noisy approximation of the low-rank component, within the PCP framework. The proposed method
    is able to handle missing values while the developed optimization algorithm has convergence guarantees.
    \item Furthermore, we extend our proposed PCP model using side information to exploit prior knowledge regarding the column and row spaces of the low-rank component in a more general algorithmic framework.
    \item In the case of UV completion, we suggest the use of generative adversarial networks to provide subspace features and side information, leading to a seamless integration of deep learning into the robust PCA framework.
    \item We demonstrate the applicability and effectiveness of the proposed approaches on synthetic data as well as on facial image denoising, UV texture completion and pose-invariant face recognition experiments with both quantitative and qualitative evaluation.\\
\end{itemize}

The remainder of this paper is organised as follows. We discuss relevant literature in Section 2, while the proposed robust principal component analysis using side information with missing values (PCPSM) along with its extension that incorporates features (PCPSFM) is presented in Section 3. In Section 4, we first evaluate our proposed algorithms on synthetic and real-world data. Then we introduce GAN as a source of features and side information for the subject of UV completion. Finally, face recognition experiments are presented in the last subsection.

\textbf{\textit{Notations}} Lowercase letters denote scalars and uppercase letters denote matrices, unless otherwise stated. For norms of matrix $\mathbf{A}$, $\Vert \mathbf{A}\Vert_F$ is the Frobenius norm; $\Vert \mathbf{A}\Vert_*$ is the nuclear norm; and $\Vert \mathbf{A}\Vert_1$ is the sum of absolute values of all matrix entries. Moreover, $\langle \mathbf{A},\mathbf{B}\rangle$ represents tr($\mathbf{A}^T\mathbf{B}$) for real matrices $\mathbf{A},\mathbf{B}$. Additionally, $\mathbf{A}\circ\mathbf{B}$ symbolises element-wise multiplication of two matrices of the same dimension.

\section{Related work}

We discuss two different lines of research, namely low-rank recovery as well as image completion.

\subsection{Robust principal component analysis}
Suppose that there is a matrix $\mathbf{L}_0\in\mathbb{R}^{n_1\times n_2}$ with rank $r\ll$ min($n_1,n_2$) and a sparse matrix $\mathbf{E}_0\in\mathbb{R}^{n_1\times n_2}$ with entries of arbitrary magnitude. If we are provided with the observation matrix $\mathbf{X} =\mathbf{L}_0+\mathbf{E}_0$, RPCA aims to recover them by solving the following objective:
\begin{equation}
\label{eq:RPCA}
    \min_{\mathbf{L},\mathbf{E}}\rank(\mathbf{L})+\lambda\Vert\mathbf{E}\Vert_0\quad\subject\quad\mathbf{X}=\mathbf{L}+\mathbf{E},
\end{equation}
where $\lambda$ is a regularisation parameter.
However, (\ref{eq:RPCA}) cannot be readily solved because it is NP-hard. PCP instead solves the following convex surrogate:
\begin{equation}
\label{eq:PCP}
    \min_{\mathbf{L},\mathbf{E}}\Vert\mathbf{L}\Vert_*+\lambda\Vert\mathbf{E}\Vert_1\quad\subject\quad\mathbf{X}=\mathbf{L}+\mathbf{E},
\end{equation}
which, under mild conditions, is equivalent to (\ref{eq:RPCA}). There exist many efficient solvers for (\ref{eq:PCP}) and its applications include background modelling from surveillance video and removing shadows and specularities from face images.

One the first methods for incorporating dictionary was proposed in the context of subspace clustering\cite{liu10,liu17}. The LRR algorithm assumes that we have available an orthogonal column space $\mathbf{U}\in\mathbb{R}^{n_1\times d_1}$, where $d_1\le n_1$, and optimises the following:
\begin{equation}
\label{eq:LRR}
    \min_{\mathbf{K},\mathbf{E}}\quad\Vert \mathbf{K}\Vert_*+\lambda\Vert\mathbf{E}\Vert_1\quad\subject\quad\mathbf{X}=\mathbf{U}\mathbf{K}+\mathbf{E}.
\end{equation}
Given an orthonormal statistical prior of facial images, LRR can be used to construct person-specific deformable models from erroneous initialisations\cite{Sagonas14}.

A generalisation of the above was proposed as Principal Component Pursuit with Features (PCPF)\cite{KaiYang16} where further row spaces $\mathbf{V}\in\mathbb{R}^{n_2\times d_2}$, $d_2\le n_2$, were assumed to be available with the following objective:
\begin{equation}
\label{eq:PCPF}
    \min_{\mathbf{H},\mathbf{E}}\quad\Vert \mathbf{H}\Vert_*+\lambda\Vert\mathbf{E}\Vert_1\quad\subject\quad\mathbf{X}=\mathbf{U}\mathbf{H}\mathbf{V}^T+\mathbf{E}.
\end{equation}
There is a stronger equivalence relation between (\ref{eq:PCPF}) and (\ref{eq:RPCA}) than (\ref{eq:PCP}). The main drawback of the above mentioned models is that features need to be accurate and noiseless, which is not trivial in practical scenarios.

In the case of missing data, robust matrix recovery methods\cite{Chen13,Shang14} enhance PCP to deal with occlusions:
\begin{equation}
\label{eq:PCPM}
    \min_{\mathbf{L},\mathbf{E}}\Vert\mathbf{L}\Vert_*+\lambda\Vert\mathbf{E}\circ\mathbf{W}\Vert_1\quad\subject\quad\mathbf{X}=\mathbf{L}+\mathbf{E},
\end{equation}
where $\mathbf{W}$ is the matrix of binary occlusion masks. Its Jacobi-type update schemes can be implemented in parallel and hence are attractive for solving large-scale problems.

\subsection{Image completion neural networks}
Recent advances in convolutional neural networks (CNN) also show great promises in visual feature learning. Context encoders (CE)\cite{pathak2016} use a encoder-decoder pipeline where the encoder takes an input image with missing regions producing a latent feature representation and the decoder takes the feature representation generating the missing image content. CE uses a joint loss function:
\begin{equation}
    \mathcal{L}=\lambda_{rec}\mathcal{L}_{rec}+\lambda_{adv}\mathcal{L}_{adv},
\end{equation}
where $\mathcal{L}_{rec}$ is the reconstruction loss and $\mathcal{L}_{adv}$ is the adversarial loss. The reconstruction loss is given by:
\begin{equation}
    \mathcal{L}_{rec}(\mathbf{x})=\Vert\mathbf{w}\circ(\mathbf{x}-F((\mathbf{1}-\mathbf{w})\circ x)\Vert_2^2,
\end{equation}
where $\mathbf{w}$ is a binary mask, $\mathbf{x}$ is an example image and CE produces an output $F(\mathbf{x})$. The adversarial loss is based on Generative Adversarial Networks (GAN). GAN learns both a generative model $G_i$ from noise distribution $\mathcal{Z}$ to data distribution $\mathcal{X}$ and a discriminative model $D_i$ by the following objective:
\begin{equation}
\mathcal{L}_{a_i}=\min_{G_i}\max_{D_i} \mathbb{E}_{\mathbf{x}\in\mathcal{X}}[\log(D_i(\mathbf{x}))] + \mathbb{E}_{\mathbf{z}\in\mathcal{Z}}[\log(1-D_i(G_i(\mathbf{z})))].
\end{equation}
For CE, the adversarial loss is modified to
\begin{equation}
    \mathcal{L}_{adv} = \max_D\mathbb{E}_{\mathbf{x}\in\mathcal{X}}[\log(D(\mathbf{x}))+\log(1-D(F((\mathbf{1}-\mathbf{w})\circ \mathbf{x})))].
\end{equation}
Generative face completion\cite{Li2017} uses two discriminators instead with the following objective
\begin{equation}
\mathcal{L}=\mathcal{L}_{p}+ \lambda_{1} \mathcal{L}_{a_1} + \lambda_{2} \mathcal{L}_{a_2},
\end{equation}
where $\mathcal{L}_p$ is a parsing loss of pixel-wise softmax between the estimated UV texture $I_{i,j}$ and the ground truth texture $I_{i,j}^{*}$ of width $W$ and height $H$
\begin{equation}
L_{p}=\frac{1}{W\times H}\sum_{i=1}^{W}\sum_{j=1}^{H}\left | I_{i,j}-I_{i,j}^{*} \right |.
\end{equation}
Patch synthesis\cite{Yang2017} optimises a loss function of three terms: the holistic content term, the local texture term and the TV-loss term. The content constraint penalises the $l_2$ difference between the optimisation result and the previous content prediction
\begin{equation}
    l_c=\Vert \mathbf{w}\circ(\mathbf{x} - \mathbf{x}_i)\Vert_2^2,
\end{equation}
where $\mathbf{x}_i$ if the optimisation result at a coarser scale. The texture constraint penalises the texture appearance across the hole,
\begin{equation}
    l_t=\frac{1}{|\mathbf{w}^\phi|}\sum_{i\in\mathbf{w}^\phi}\Vert P_i\circ\phi(\mathbf{x})-P_{nn(i)}\circ\phi(\mathbf{x})\Vert_2^2,
\end{equation}
where $\mathbf{w}^\phi$ is the corresponding mask in the VGG-19 feature map $\phi(\mathbf{x})$, $|\mathbf{w}^\phi|$ is the number of patches sampled in $\mathbf{w}^\phi$, $P_i$ is the local neural patch at location $i$, and $nn(i)$ is the nearest neighbor of $i$. The TV loss encourages smoothness:
\begin{equation}
    l_{TV}=\sum_{i,j}((\mathbf{x}_{i,j+1}-\mathbf{x}_{i,j})^2+(\mathbf{x}_{i+1,j}-\mathbf{x}_{i,j})^2).
\end{equation}

\section{Robust Principal Component Analysis Using Side Information}

In this section, we propose models of RPCA using side information. In particular, we incorporate side information into PCP by using the trace distance of the difference between the low-rank component and the noisy estimate, which can be seen as a generalisation of compressed sensing with prior information where $l_1$ norm has been used to minimise the distance between the target signal and side information \cite{Mota17}.

\subsection{The PCPSM and PCPSFM models}

Assuming that a noisy estimate of the low-rank component of the data $\mathbf{S}\in\mathbb{R}^{n_1\times n_2}$ is available, we propose the following model of PCP using side information with missing values (PCPSM):
\begin{equation}
    \begin{split}
        &\min_{\mathbf{L},\mathbf{E}}\quad\Vert \mathbf{L}\Vert_*+\alpha\Vert \mathbf{L}-\mathbf{S}\Vert_*+\lambda\Vert\mathbf{W}\circ\mathbf{E}\Vert_1\\
        &\subject\ \quad\mathbf{X}=\mathbf{L}+\mathbf{E},
    \end{split}
\end{equation}
where $\alpha>0,\lambda>0$ are parameters that weigh the effects of side information and noise sparsity. 

The proposed PCPSM can be revamped to generalise the previous attempt of PCPF by the following objective of PCP using side information with features and missing values (PCPSFM):
\begin{equation}\label{eq:PCPSFM}
    \begin{split}
        &\min_{\mathbf{H},\mathbf{E}}\quad\Vert\mathbf{H}\Vert_*+\alpha\Vert\mathbf{H}-\mathbf{D}\Vert_*+\lambda\Vert\mathbf{W}\circ\mathbf{E}\Vert_1\\
        &\subject\ \quad\mathbf{X}=\mathbf{U}\mathbf{H}\mathbf{V}^T+\mathbf{E},\quad \mathbf{D}=\mathbf{U}^T\mathbf{S}\mathbf{V},\\
    \end{split}
\end{equation}
where $\mathbf{H}\in\mathbb{R}^{d_1\times d_2},\mathbf{D}\in\mathbb{R}^{d_1\times d_2}$ are bilinear mappings for the recovered low-rank matrix $\mathbf{L}$ and side information $\mathbf{S}$ respectively. Note that the low-rank matrix $\mathbf{L}$ is recovered from the optimal solution ($\mathbf{H}^*,\mathbf{E}^*$) to objective (\ref{eq:PCPSFM}) via $\mathbf{L}=\mathbf{U}\mathbf{H}^*\mathbf{V}^T$. If side information $\mathbf{S}$ is not available, PCPSFM reduces to PCPF with missing values by setting $\alpha$ to zero. If the features $\mathbf{U},\mathbf{V}$ are not present either, PCP with missing values can be restored by fixing both of them at identity. However, when only the side information $\mathbf{S}$ is accessible, objective (\ref{eq:PCPSFM}) is transformed back into PCPSM.

\subsection{The algorithm}

If we substitute $\mathbf{B}$ for $\mathbf{H}-\mathbf{D}$ and orthogonalise $\mathbf{U}$ and $\mathbf{V}$, the optimisation problem (\ref{eq:PCPSFM}) is identical to the following convex but non-smooth problem:
\begin{equation}\label{eq:PCPSFM2}
    \begin{split}
        &\min_{\mathbf{H},\mathbf{E}}\quad\Vert\mathbf{H}\Vert_*+\alpha\Vert\mathbf{B}\Vert_*+\lambda\Vert\mathbf{W}\circ\mathbf{E}\Vert_1\\
        &\subject\ \quad\mathbf{X}=\mathbf{U}\mathbf{H}\mathbf{V}^T+\mathbf{E},\quad\mathbf{B}=\mathbf{H}-\mathbf{U}^T\mathbf{S}\mathbf{V},
    \end{split}
\end{equation}
which is amenable to the multi-block alternating direction method of multipliers (ADMM).

The corresponding augmented Lagrangian of (\ref{eq:PCPSFM2}) is:
\begin{equation}\label{eq:PCPSFMl}
\begin{split}
&l(\mathbf{H},\mathbf{B},\mathbf{E},\mathbf{Z},\mathbf{N})=\Vert\mathbf{H}\Vert_*+\alpha\Vert\mathbf{B}\Vert_*+\lambda\Vert\mathbf{W}\circ\mathbf{E}\Vert_1\\&+\langle\mathbf{Z},\mathbf{X}-\mathbf{E}-\mathbf{U}\mathbf{H}\mathbf{V}^T\rangle +\frac{\mu}{2}\Vert\mathbf{X}-\mathbf{E}-\mathbf{U}\mathbf{H}\mathbf{V}^T\Vert_F^2\\&+\langle\mathbf{N},\mathbf{H}-\mathbf{B}-\mathbf{U}^T\mathbf{S}\mathbf{V}\rangle+\frac{\mu}{2}\Vert\mathbf{H}-\mathbf{B}-\mathbf{U}^T\mathbf{S}\mathbf{V}\Vert_F^2,
\end{split}
\end{equation}
where $\mathbf{Z}\in\mathbb{R}^{n_1\times n_2}$ and $\mathbf{N}\in\mathbb{R}^{d_1\times d_2}$ are Lagrange multipliers and $\mu$ is the learning rate.

The ADMM operates by carrying out repeated cycles of updates till convergence. During each cycle, $\mathbf{H},\mathbf{B},\mathbf{E}$ are updated serially by minimising (\ref{eq:PCPSFMl}) with other variables fixed. Afterwards, Lagrange multipliers $\mathbf{Z},\mathbf{N}$ are updated at the end of each iteration. Direct solutions to the single variable minimisation subproblems rely on  the shrinkage and the singular value thresholding operators \cite{Candes11}. Let $\mathcal{S}_\tau(a)\equiv\sgn(a)\max(|a|-\tau,0)$ serve as the shrinkage operator, which naturally extends to matrices,  $\mathcal{S}_\tau(\mathbf{A})$, by applying it to matrix $\mathbf{A}$ element-wise. Similarly, let $\mathcal{D}_\tau(\mathbf{A})\equiv \mathbf{M}\mathcal{S}_\tau(\mathbf{\Sigma})\mathbf{Y}^T$ be the singular value thresholding operator on real matrix $\mathbf{A}$, with $\mathbf{A}=\mathbf{M}\mathbf{\Sigma}\mathbf{Y}^T$ being the singular value decomposition (SVD) of $\mathbf{A}$.

Minimising (\ref{eq:PCPSFMl}) w.r.t. $\mathbf{H}$ at fixed $\mathbf{B},\mathbf{E},\mathbf{Z},\mathbf{N}$ is equivalent to the following:
\begin{equation}
    \arg\min_{\mathbf{H}}\ \Vert\mathbf{H}\Vert_*+\mu\Vert\mathbf{P}-\mathbf{U}\mathbf{H}\mathbf{V}^T\Vert_F^2,
\end{equation}
where $\mathbf{P} = \frac{1}{2}(\mathbf{X}-\mathbf{E}+\frac{1}{\mu}\mathbf{Z}+\mathbf{U}(\mathbf{B}+\mathbf{U}^T\mathbf{S}\mathbf{V}-\frac{1}{\mu}\mathbf{N})\mathbf{V}^T)$. Its solution is shown to be $\mathbf{U}^T\mathcal{D}_{\frac{1}{2\mu}}(\mathbf{P})\mathbf{V}$. Furthermore, for $\mathbf{B}$,
\begin{equation}
    \arg\min_{\mathbf{B}}\ l=\arg\min_{\mathbf{B}}\ \alpha\Vert\mathbf{B}\Vert_*+\frac{\mu}{2}\Vert\mathbf{Q}-\mathbf{B}\Vert_F^2,
\end{equation}
where $\mathbf{Q}=\mathbf{H}-\mathbf{U}^T\mathbf{S}\mathbf{V}+\frac{1}{\mu}\mathbf{N}$, whose update rule is $\mathcal{D}_{\frac{\alpha}{\mu}}(\mathbf{Q})$, and for $\mathbf{E}$,
\begin{equation}
    \arg\min_{\mathbf{E}}\ l=\arg\min_{\mathbf{E}}\ \lambda\Vert\mathbf{W}\circ\mathbf{E}\Vert_1+\frac{\mu}{2}\Vert\mathbf{R}-\mathbf{E}\Vert_F^2,
\end{equation}
where $\mathbf{R}=\mathbf{X}-\mathbf{U}\mathbf{H}\mathbf{V}^T+\frac{1}{\mu}\mathbf{Z}$ with a closed-form solution $\mathcal{S}_{\lambda\mu^{-1}}(\mathbf{R})\circ\mathbf{W}+\mathbf{R}\circ(\mathbf{1}-\mathbf{W})$. Finally, Lagrange multipliers are updated as usual:
\begin{equation}
    \mathbf{Z}= \mathbf{Z}+\mu(\mathbf{X}-\mathbf{E}-\mathbf{U}\mathbf{H}\mathbf{V}^T),
\end{equation}
\begin{equation}
    \mathbf{N} =\mathbf{N} +\mu (\mathbf{H}-\mathbf{B}-\mathbf{U}^T\mathbf{S}\mathbf{V}).
\end{equation}
The overall algorithm is summarised in Algorithm \ref{alg:PCPSFMa}.
\begin{algorithm}[tb]
   \caption{ADMM solver for PCPSFM}
   \label{alg:PCPSFMa}
\begin{algorithmic}[1]
   \REQUIRE Observation $\mathbf{X}$, mask $\mathbf{W}$, side information $\mathbf{S}$, features $\mathbf{U},\mathbf{V}$, parameters $\alpha,\lambda>0$, scaling ratio $\beta>1$.
   \STATE {\bfseries Initialize:} $\mathbf{Z}=0$, $\mathbf{N}=\mathbf{B}=\mathbf{H}=0$, $\beta=\frac{1}{\Vert\mathbf{X}\Vert_2}$.
   \WHILE{not converged}
   \STATE $\mathbf{E}=\mathcal{S}_{\lambda\mu^{-1}}(\mathbf{X}-\mathbf{U}\mathbf{H}\mathbf{V}^T+\frac{1}{\mu}\mathbf{Z})\circ\mathbf{W}+(\mathbf{X}-\mathbf{U}\mathbf{H}\mathbf{V}^T+\frac{1}{\mu}\mathbf{Z})\circ(\mathbf{1}-\mathbf{W})$
   \STATE $\mathbf{H}=\mathbf{U}^T\mathcal{D}_{\frac{1}{2\mu}}(\frac{1}{2}(\mathbf{X}-\mathbf{E}+\frac{1}{\mu}\mathbf{Z}+\mathbf{U}(\mathbf{B}+\mathbf{U}^T\mathbf{S}\mathbf{V}-\frac{1}{\mu}\mathbf{N})\mathbf{V}^T))\mathbf{V}$
   \STATE $\mathbf{B}=\mathcal{D}_{\alpha\mu^{-1}}(\mathbf{H}-\mathbf{U}^T\mathbf{S}\mathbf{V}+\frac{1}{\mu}\mathbf{N})$
   \STATE $\mathbf{Z} = \mathbf{Z}+\mu(\mathbf{X}-\mathbf{E}-\mathbf{U}\mathbf{H}\mathbf{V}^T)$
   \STATE $\mathbf{N} =\mathbf{N} +\mu (\mathbf{H}-\mathbf{B}-\mathbf{U}^T\mathbf{S}\mathbf{V})$
   \STATE $\mu = \mu\times\beta$
   \ENDWHILE
   \ENSURE $\mathbf{L}=\mathbf{U}\mathbf{H}\mathbf{V}^T$, $\mathbf{E}$
\end{algorithmic}
\end{algorithm}

\subsection{Complexity and convergence}

Orthogonalisation of the features $\mathbf{U},\mathbf{V}$ via the Gram-Schmidt process has an operation count of $O(n_1d_1^2)$ and $O(n_2d_2^2)$ respectively. The $\mathbf{H}$ update in Step $4$ is the most costly step of each iteration in Algorithm \ref{alg:PCPSFMa}. Specifically, the SVD required in the singular value thresholding action dominates with $O(\min(n_1n_2^2,n_1^2n_2))$ complexity.

A direct extension of the ADMM has been applied to our 3-block separable convex objective. Its global convergence is proved in \textbf{Theorem 1}. We have also used
the fast continuation technique already applied to the matrix
completion problem\cite{Toh2010} to increase $\mu$ incrementally
for accelerated superlinear performance\cite{Rockafellar76}. The cold start initialisation strategies for variables $\mathbf{H},\mathbf{B}$ and Lagrange multipliers $\mathbf{Z},\mathbf{N}$ are described in \cite{Boyd11}. Besides, we have scheduled $\mathbf{E}$ to be updated first and taken the initial learning
rate $\mu$ as suggested in\cite{lin09}. As for stopping criteria, we have employed the Karush-Kuhn-Tucker (KKT) feasibility conditions. Namely, within a maximum number of $1000$ iterations, when the maximum of $\Vert\mathbf{X}-\mathbf{E}_k-\mathbf{U}\mathbf{H}_k\mathbf{V}^T\Vert_F/\Vert\mathbf{X}\Vert_F$ and $\Vert\mathbf{H}_k -\mathbf{B}_k - \mathbf{U}^T\mathbf{S}\mathbf{V}\Vert_F/\Vert\mathbf{X}\Vert_F$ dwindles from a pre-defined threshold $\epsilon$, the algorithm is terminated, where $k$ signifies values at the $k$\textsuperscript{th} iteration.

\begin{innercustomthm}
    Let the iterative squence $\{(\mathbf{E}^k,\mathbf{H}^k,\mathbf{B}^k,\mathbf{Z}^k,\mathbf{N}^k)\}$ be generated by the direct extension of ADMM, Algorithm \ref{alg:PCPSFMa}, then the sequence $\{(\mathbf{E}^k,\mathbf{H}^k,\mathbf{B}^k,\mathbf{Z}^k,\mathbf{N}^k)$ converges to a solution of the Karush-Kuhn-Tucher (KKT) system in the fully observed case.
\end{innercustomthm}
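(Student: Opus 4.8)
The plan is to treat (\ref{eq:PCPSFM2}) in the fully observed regime $\mathbf{W}=\mathbf{1}$ as a linearly constrained convex program in the primal--dual variable $w=(\mathbf{H},\mathbf{B},\mathbf{E},\mathbf{Z},\mathbf{N})$ and to prove that the direct ADMM is a Fej\'er-monotone iteration toward the solution set of its KKT system. First I would record the KKT conditions obtained by setting the subdifferentials of the augmented Lagrangian (\ref{eq:PCPSFMl}) to zero: $\mathbf{Z}^\star\in\lambda\,\partial\Vert\mathbf{E}^\star\Vert_1$, $\mathbf{N}^\star\in\alpha\,\partial\Vert\mathbf{B}^\star\Vert_*$, and $\mathbf{U}^T\mathbf{Z}^\star\mathbf{V}-\mathbf{N}^\star\in\partial\Vert\mathbf{H}^\star\Vert_*$, together with primal feasibility $\mathbf{U}\mathbf{H}^\star\mathbf{V}^T+\mathbf{E}^\star=\mathbf{X}$ and $\mathbf{H}^\star-\mathbf{B}^\star=\mathbf{U}^T\mathbf{S}\mathbf{V}$. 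Since the constraints are affine and the objective is a finite convex function, Slater's condition holds trivially and this KKT system is both necessary and sufficient for optimality; I would phrase it compactly as a monotone variational inequality $\theta(u)-\theta(u^\star)+\langle w-w^\star,\mathcal{F}(w^\star)\rangle\ge 0$, where $\theta$ collects the three nonsmooth terms and $\mathcal{F}$ is the affine monotone map induced by the two linear constraints.

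Next I would verify that each block subproblem has a unique minimiser and coincides with the proximal and singular-value-thresholding maps already derived in the text. This is where the fully observed hypothesis enters: with $\mathbf{W}=\mathbf{1}$ the term $\lambda\Vert\mathbf{E}\Vert_1$ is a genuine norm, so the coefficient operator $A_{\mathbf{E}}\colon\mathbf{E}\mapsto(\mathbf{E},\mathbf{0})$ has full column rank and the $\mathbf{E}$-update is strongly convex, while orthonormality of $\mathbf{U},\mathbf{V}$ makes $\mathbf{H}\mapsto\mathbf{U}\mathbf{H}\mathbf{V}^T$ injective so the $\mathbf{H}$- and $\mathbf{B}$-updates are strongly convex as well. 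Under missing data $\mathbf{W}\circ\mathbf{E}$ annihilates the unobserved entries, $A_{\mathbf{E}}$ loses full rank, and the coercivity used below fails, which is exactly why the statement is restricted to the fully observed case. I would then derive the one-step estimate in the He--Yuan style: subtracting the optimality characterisations of the iterates from those of $w^\star$ and summing yields an inequality of the form $\Vert w^{k}-w^\star\Vert_{G}^2-\Vert w^{k+1}-w^\star\Vert_{G}^2\ge \Vert w^{k}-w^{k+1}\Vert_{G}^2+(\text{cross term})$ for a positive semidefinite metric $G$ built from $\mu$ and the coefficient operators.

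The main obstacle is the cross term, which is precisely what renders the direct three-block extension of ADMM divergent in general. Here the decisive structural fact is that $\mathbf{E}$ enters only the first constraint and $\mathbf{B}$ only the second, so their coefficient operators are orthogonal, $A_{\mathbf{E}}^T A_{\mathbf{B}}=\mathbf{0}$; equivalently, the update order $\mathbf{E}\to\mathbf{H}\to\mathbf{B}$ is a correct two-block sweep for \emph{each} constraint separately, with $\mathbf{H}$ the single shared variable and the multipliers $\mathbf{Z},\mathbf{N}$ decoupled. I would show that this orthogonality annihilates the offending inner product $\langle A_{\mathbf{E}}(\mathbf{E}^k-\mathbf{E}^{k+1}),\,A_{\mathbf{B}}(\mathbf{B}^k-\mathbf{B}^{k+1})\rangle$, leaving $G\succeq 0$ and restoring the contraction. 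From the resulting Fej\'er monotonicity I would conclude in the standard way: $\{w^k\}$ is bounded and $\Vert w^k-w^{k+1}\Vert_G\to 0$, so the successive residuals vanish; any cluster point satisfies the variational inequality by closedness of the subdifferentials and hence solves the KKT system; and Fej\'er monotonicity with respect to that particular limit upgrades subsequential convergence to convergence of the entire sequence.
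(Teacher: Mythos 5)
Your route is genuinely different from the paper's and, at the level of strategy, it is sound. The paper does not build a Fej\'er/contraction argument at all: it verifies the hypotheses of an off-the-shelf convergence theorem for the direct three-block extension (its reference [Sun16]), the only nontrivial hypothesis being a ``sub-strong monotonicity'' property of the $\ell_1$ block. That property is established at the specific KKT point $(\mathbf{H}_0,\mathbf{E}_0,\mathbf{B}_0,\mathbf{Z}_0)$ using the dual certificate from Cand\`es et al.\ ($|\mathbf{Z}_{0ij}|\le\lambda$ with equality exactly on the support of $\mathbf{E}_0$) together with boundedness of $\mathbf{E}$; the remaining hypotheses (closed proper convexity of the nuclear-norm blocks, full column rank of the constraint operators) are immediate. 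Your argument instead rests on the structural identity $A_{\mathbf{E}}^T A_{\mathbf{B}}=\mathbf{0}$, which is correct here because $\mathbf{E}$ appears only in the first constraint of (\ref{eq:PCPSFM2}) and $\mathbf{B}$ only in the second. What your approach buys is self-containedness and independence from the RPCA dual certificate; what the paper's buys is brevity.

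Two cautions. First, your explanation of why the statement is restricted to the fully observed case is off: the mask $\mathbf{W}$ enters the objective, not the constraints, so $A_{\mathbf{E}}\colon\mathbf{E}\mapsto(\mathbf{E},\mathbf{0})$ has full column rank for any $\mathbf{W}$. The actual obstruction in the paper's argument is that $\lambda\Vert\mathbf{W}\circ\mathbf{E}\Vert_1$ is only a seminorm, which destroys the lower bound $\lambda\Vert\mathbf{E}\Vert_1-\langle\mathbf{Z}_0,\mathbf{E}\rangle\ge\mu\Vert\mathbf{E}-\mathbf{E}_0\Vert_F^2$ on the unobserved entries. Second, with the update order $\mathbf{E}\to\mathbf{H}\to\mathbf{B}$ your orthogonality relates the \emph{first} and \emph{third} blocks, not adjacent ones, so the iteration does not collapse to a two-block sweep by simple grouping: the $\mathbf{H}$-update uses the new $\mathbf{E}$ but the old $\mathbf{B}$, whereas grouping $(\mathbf{E},\mathbf{B})$ into one block would feed it a single copy of $\mathbf{H}$. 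Vanishing of the cross term is the right mechanism, but positive semidefiniteness of the resulting metric $G$ must be verified for this non-adjacent case rather than asserted; this is a known sufficient condition in the literature on the direct multi-block extension of ADMM, so the gap is one of detail rather than of substance, but as written that step is the weakest link in your sketch.
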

\begin{proof}
    We first show that function $\theta_3(x_3)=\Vert E\Vert_1$ is sub-strong monotonic. From\cite{Candes11}, we know that $(x_1^*,x_2^*,x_3^*,\lambda^*)=(\mathbf{H}_0,\mathbf{E}_0,\mathbf{B}_0,\mathbf{Z}_0)$ is a KKT point, where $\mathbf{H}_0=\mathbf{U}^T\mathbf{L}_0\mathbf{V}$, $\mathbf{B}_0 = \mathbf{H}_0-\mathbf{U}^T\mathbf{S}\mathbf{V}$, $\mathbf{Z}_{0ij}=\lambda[\sgn(\mathbf{E}_0)]_{ij}$, if $(i,j)\in\Omega$ and $|\mathbf{Z}_{0ij}|<\lambda$, otherwise. Since $\theta_3(x_3)$ is convex, by definition, we have
    \begin{equation}
    \label{eq:def}
        \theta_3(x_3^*)\ge\theta_3(x_3)+\langle y_3,x_3^*-x_3\rangle,\quad\forall x_3\text{ and }\forall y_3\in	\partial\theta_3(x_3).
    \end{equation}
    Since $A_3$ is identity in (\ref{eq:PCPSFM2}), we have
    \begin{equation}
    \begin{split}
        &\theta_3(x_3)-\theta_3(x_3^*)+\langle A_3^T\lambda^*,x_3^*-x_3\rangle\\=&\lambda\Vert\mathbf{E}\Vert_1-\lambda\Vert\mathbf{E}_0\Vert_1+\langle\mathbf{Z}_0,\mathbf{E}_0\rangle - \langle\mathbf{Z}_0,\mathbf{E}\rangle,\\=&\lambda\Vert\mathbf{E}\Vert_1-\langle\mathbf{Z}_0,\mathbf{E}\rangle\\\ge&0,
    \end{split}
    \end{equation}
    where the third line follows from $\mathbf{Z}_{0ij}=\lambda[\sgn(\mathbf{E}_0)]_{ij}$ when $(i,j)\in\Omega$ and $\mathbf{E}_{0ij}=0$ when $(i,j)\notin\Omega$, and the fourth line follows from $|\mathbf{Z}_{0ij}|\le\lambda$, $|\mathbf{Z}_{0ij}\mathbf{E}_{ij}|\le|\mathbf{Z}_{0ij}||\mathbf{E}_{ij}|$ and $\Vert\mathbf{E}\Vert_1=\sum_{i,j}|\mathbf{E}_{ij}|$. But $\mathbf{E}$ is bounded, so there always exists $\mu>0$ such that
    \begin{equation}
        \lambda\Vert\mathbf{E}\Vert_1-\langle\mathbf{Z}_0,\mathbf{E}\rangle\ge\mu\Vert\mathbf{E}-\mathbf{E}_0\Vert_F^2.
    \end{equation}
    Thus, overall we have
    \begin{equation}
        \theta_3(x_3)\ge\theta_3(x_3^*)+\langle A_3^T\lambda^*,x_3-x_3^*\rangle+\mu\Vert\mathbf{E}-\mathbf{E}_0\Vert_F^2.
    \end{equation}
    Combining with (\ref{eq:def}), we arrive at
    \begin{equation}
        \langle y_3-A_3^T\lambda^*,x_3-x_3^*\rangle\ge\mu|x_3-x_3^*|^2,\ \forall x_3\text{ and }\forall y_3\in	\partial\theta_3(x_3),
    \end{equation}
    which shows that $\Vert E\Vert_1$ satisfies the sub-strong monotonicity assumption.

Additionally, $\Vert\mathbf{H}\Vert_*,\Vert\mathbf{B}\Vert_*$ are close and proper convex and $A$'s have full column rank. We thus deduce that the direct extension of ADMM, Algorithm \ref{alg:PCPSFMa}, applied to objective (\ref{eq:PCPSFM2}) is convergent according to\cite{Sun16}.\
\end{proof}

\section{Experimental results}

\subsection{Parameter calibration}
\begin{figure}[b]
\begin{center}
   \includegraphics[width=.9\linewidth]{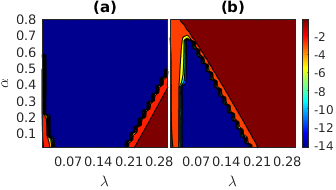}
\end{center}
   \caption{Log-scale relative error ($\log\frac{\Vert\mathbf{L}-\mathbf{L}_0\Vert_F}{\Vert\mathbf{L}_0\Vert_F}$) of PCPSM \textbf{(a)} when side information is perfect ($\mathbf{S}=\mathbf{L}_0$) and \textbf{(b)} when side information is the observation ($\mathbf{S}=\mathbf{M}$).}
    \label{fig:param}
\end{figure}

In this section, we illustrate the enhancement made by
side information through both numerical simulations and
real-world applications. First, we explain how parameters
used in our implementation are tuned. Second, we compare
the recoverability of our proposed algorithms with
state-of-the-art methods for incorporating features or dictionary,
viz. PCPF [17] and LRR [11] on synthetic data as well
as the baseline PCP [9] when there are no features available. Last, we show how powerful side information can be
for the task of UV completion in post-invariant face recognition, where both features and side information are derived from generative adversarial networks. 

For LRR, clean subspace X is used as in\cite{Sagonas14} instead of the observation $\mathbf{X}$ itself as the dictionary. PCP is solved via the inexact ALM\cite{lin09} and the heuristics for predicting the dimension of principal singular space is not adopted here
due to its lack of validity on uncharted real data\cite{Hintermüller15}. We also include Partial Sum of Singular Values (PSSV)\cite{Oh16} in
our comparison for its stated advantage in view of the limited number of images available. The
stopping criteria for PCPF, LRR, PCP and PSSV are all set
to the same KKT optimality conditions for reasons of consistency.

\begin{figure*}[t]
\begin{center}
\includegraphics[width=1\linewidth]{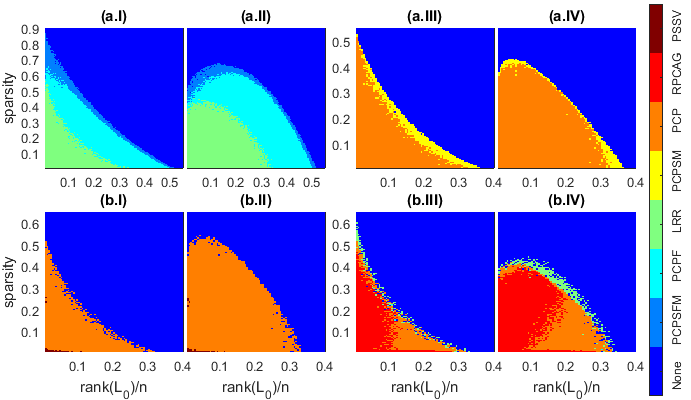}
\end{center}
   \caption{Domains of recovery by various algorithms in the fully observed case: \textbf{(I,III)} for random signs and \textbf{(II,IV)} for coherent signs.}
    \label{fig:full}
\end{figure*}
\begin{figure*}[b!]
\begin{center}
\includegraphics[width=1\linewidth]{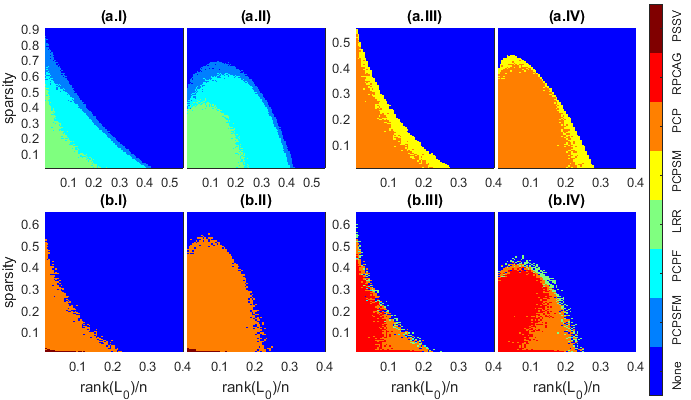}
\end{center}
   \caption{Domains of recovery by various algorithms in the partially observed case: \textbf{(I,III)} for random signs and \textbf{(II,IV)} for coherent signs.}
   \label{fig:partial}
\end{figure*}

\begin{figure*}[b]
\begin{center}
   \includegraphics[width=1\linewidth]{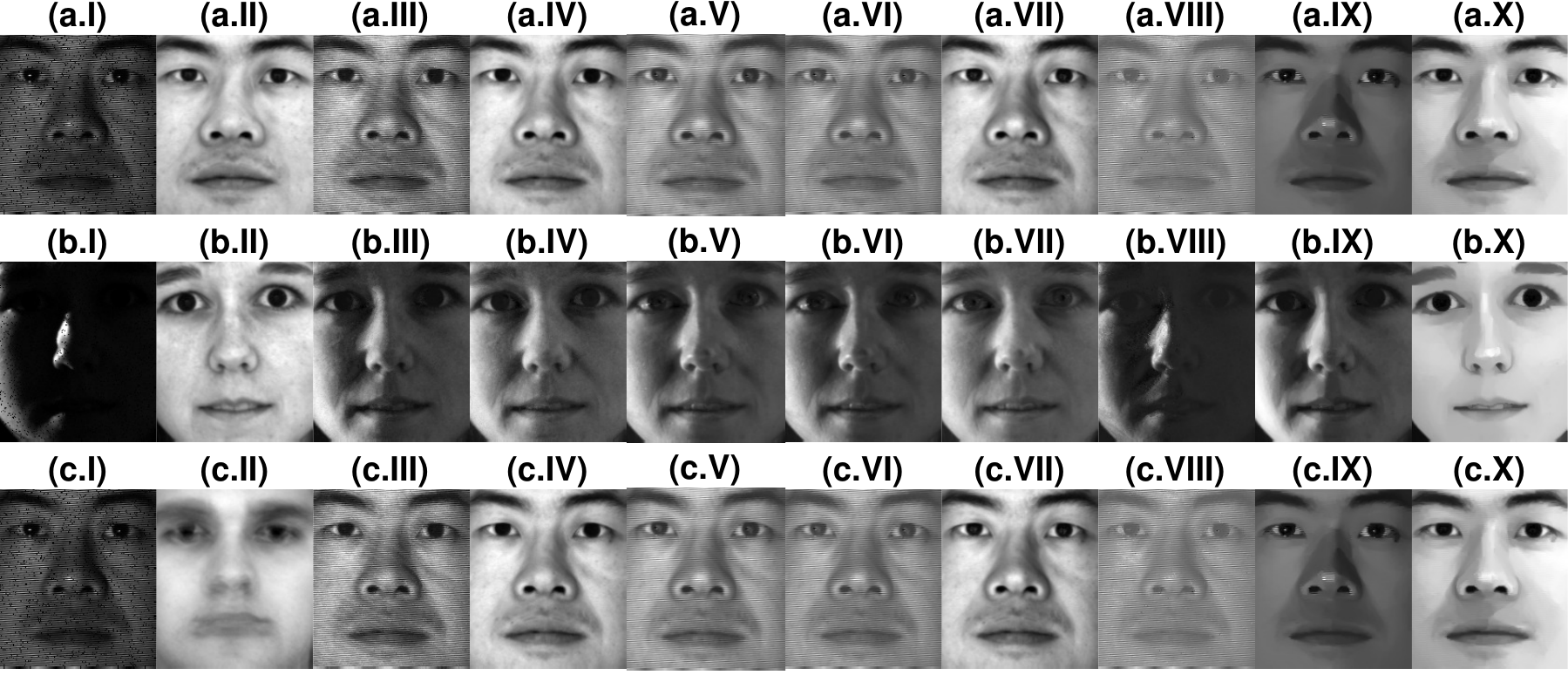}
\end{center}
   \caption{Comparison of face denoising ability: \textbf{(I)} Observation; \textbf{(II)} side information; \textbf{(III)} PCP; \textbf{(IV)} PCPSM; \textbf{(V)} LRR; \textbf{(VI)} PCPF; \textbf{(VII)} PCPFSM; \textbf{(VIII)} PSSV; \textbf{(IX)} RPCAG; and \textbf{(X)} FRPCAG.}
    \label{fig:Yale}
\end{figure*}

In order to tune the algorithmic parameters, we first conduct a benchmark experiment as follows: a low-rank matrix $\mathbf{L}_0$ is generated from $\mathbf{L}_0=\mathbf{J}\mathbf{K}^T$, where $\mathbf{J},\mathbf{K}\in\mathbb{R}^{200\times10}$
have entries from a $\mathcal{N}(0, 0.005)$ distribution; a $200 \times 200$ sparse matrix $\mathbf{E}_0$ is generated by randomly setting $38000$ entries to zero with others taking values of $\pm1$ with equal probability; side information $\mathbf{S}$ is assumed perfect, that is, $\mathbf{S} = \mathbf{L}_0$; $\mathbf{U}$ is set as the left-singular vectors of $\mathbf{L}_0$; and $\mathbf{V}$ is set as the right-singular vectors of $\mathbf{L}_0$; all entries are observed. It has been found
that a scaling ratio $\beta=1.1$, a tolerance threshold $\epsilon= 10^{-7}$ and a maximum step size $\mu=10^{7}$ to avoid ill-conditioning can bring all models except PSSV to convergence with a recovered $\mathbf{L}$ of rank $10$, a recovered $\mathbf{E}$ of sparsity $5\%$ and an
accuracy $\Vert\mathbf{L}-\mathbf{L}_0\Vert_F/\Vert\mathbf{L}_0\Vert_F$ on the order of $10^{-6}$. Still,
these apply to PSSV as is done similarly in\cite{Oh16}.

Although theoretical determination of $\alpha$ and $\lambda$ is beyond the scope of this paper, we nevertheless provide empirical guidance based on extensive experiments. A parameter weep in the $\alpha-\lambda$ space for perfect side information is
shown in Figure \ref{fig:param}(a) and for observation as side information
in Figure \ref{fig:param}(b) to impart a lower bound and a upper bound
respectively. It can be easily seen that $\lambda = 1/\sqrt{200}$ (or $\lambda = 1/\sqrt{\max(n_1,n_2)}$ for a general matrix of dimension
$n_1\times n_2$) from Robust PCA works well in both cases. Conversely,
$\alpha$ depends on the quality of the side information.
When the side information is accurate, a large $\alpha$ should be
selected to capitalise upon the side information as much as
possible, whereas when the side information is improper, a
small $\alpha$ should be picked to sidestep the dissonance caused
by the side information. Here, we have discovered that a κ
value of 0.2 works best with synthetic data and a value of
0.5 is suited for public video sequences, both of which will
be used in all experiments in subsequent sections together
with other aforementioned parameter settings. It is worth
emphasising again that prior knowledge of the structural information
about the data yields more appropriate values for $\alpha$ and $\lambda$.

\subsection{Phase transition on synthetic datasets}

We now focus on the recoverability problem, i.e. recovering matrices of varying ranks from errors of varying sparsity. True low-rank matrices are created via $\mathbf{L}_0=\mathbf{J}\mathbf{K}^T$, where $200\times r$ matrices $\mathbf{J},\mathbf{K}$ have independent elements drawn randomly from a Gaussian distribution of mean $0$ and variance $5\cdot10^{-3}$ so $r$ is the rank of $\mathbf{L}_0$. Next, we generate $200\times200$ error matrices $\mathbf{E}_0$, which possess $\rho_s\cdot200^2$ non-zero elements located randomly within the matrix. We consider two types of entries for $\mathbf{E}_0$: Bernoulli $\pm1$ and $\mathcal{P}_\Omega(\sgn(\mathbf{L}_0))$, where $\mathcal{P}$ is the projection operator. $\mathbf{X}=\mathbf{L}_0+\mathbf{E}_0$ thus becomes the simulated observation. For each $(r,\rho_s)$ pair, three observations are constructed. The recovery is successful if for all these three problems, 
\begin{equation}
    \frac{\Vert\mathbf{L}-\mathbf{L}_0\Vert_F}{\Vert\mathbf{L}_0\Vert_F}<10^{-3}
\end{equation}
from the recovered $\mathbf{L}$. In addition, let $\mathbf{L}_0=\mathbf{M}\mathbf{\Sigma}\mathbf{Y}^T$ be the SVD of $\mathbf{L}_0$. Feature $\mathbf{U}$ is formed by randomly interweaving column vectors of $\mathbf{M}$ with $d$ arbitrary orthonormal bases for the null space of $\mathbf{M}^T$, while permuting the expanded columns of $\mathbf{Y}$ with $d$ random orthonormal bases for the kernel of $\mathbf{Y}^T$ forms feature $\mathbf{V}$. Hence, the feasibility conditions are fulfilled: $\mathbb{C}(\mathbf{U})\supseteq\mathbb{C}(\mathbf{L}_0)$, $\mathbb{C}(\mathbf{V})\supseteq\mathbb{C}(\mathbf{L}_0^T)$, where $\mathbb{C}$ is the column space operator.

For each trial, we construct the side information by directly adding small Gaussian noise to each element of $\mathbf{L}_0$: $l_{ij}\rightarrow l_{ij}+\mathcal{N}(0,2.5r\cdot10^{-9})$, $i,j=1,2,\cdots,200$. As a result, the standard deviation of the error in each element is $1\%$ of that among the elements themselves. On average, the Frobenius percent error, $\Vert\mathbf{S}-\mathbf{L}_0\Vert_F/\Vert\mathbf{L}_0\Vert_F$, is $1\%$. Such side information is genuine in regard to the fact that classical PCA with accurate rank is not able to eliminate the noise \cite{Shabalin13}. We set $d$ to 10 throughout.

\textbf{Full observation} Figures \ref{fig:full}(a.I) and (a.II) plot results from PCPF, LRR and PCPSFM. On the other hand, the situation with no available features is investigated in Figures \ref{fig:full}(a.III) and \ref{fig:full}(a.IV) for PCP and PCPSM. The frontier of PCPF has been advanced by PCPSFM everywhere for both sign types. Especially at low ranks, errors with much higher density can be removed. Without features, PCPSM surpasses PCP by and large with significant expansion at small sparsity for both cases. Results from RPCAG and PSSV are worse than PCP with LRR marginally improving (see Figures \ref{fig:full}(b.I), (b.II), (b.III) and b(IV)).

\textbf{Partial observation} Figures \ref{fig:partial}(a.I) and (a.II) map out the results for PCPF, LRR and PCPSFM when $10\%$ of the elements are occluded and Figures \ref{fig:partial}(a.III) and (a.IV) for featureless PCP and PCPSM. In all cases, areas of recovery are reduced. However, there are now larger gaps between PCPF, PCPSFM and PCP, PCPSM. This marks the usefulness of side information particularly in the event of missing observations. We remark that in unrecoverable areas, PCPSM and PCPSFM still obtain much smaller values of $\Vert L-L_0\Vert_F$. FRPCAG fails to recover anything at all.

\subsection{Face denoising}

If a surface is convex Lambertian and the lighting is isotropic and distant, then the rendered model spans a 9-D linear subspace \cite{Basri03}. Nonetheless, facial images are only approximately so because facial harmonic planes have negative pixels and real lighting conditions entail unavoidable occlusion and albedo variations. It is thus more reasonable to decompose facial image formation as a low-rank component for face description and a sparse component for defects. In pursuit of this low-rank portrayal, we suggest that there can be further boost to the performance of facial characterisation by leveraging an image which faithfully represents the subject.

We consider images of a fixed pose under different illuminations from the extended Yale B database for testing. All 64 images were studied for each person. $32556\times64$ observation matrices were formed by vectorising each $168\times192$ image and the side information was chosen to be the average of all images, tiled to the same size as the observation matrix for each subject. In addition, $5\%$ of randomly selected pixels of each image were set as missing entries.

For LLR, PCPF and PCPSFM to run, we learn the feature dictionary following an approach by Vishal et al.~\cite{Patel12}. In a nutshell, the feature learning process can be treated as a sparse encoding problem. More specifically, we simultaneously seek a dictionary $\mathbf{D}\in\mathbb{R}^{n_1\times c}$ and a sparse representation $\mathbf{B}\in\mathbb{R}^{c\times n_2}$ such that:
\begin{equation}
\label{eq:KSVD}
    \min_{\mathbf{D},\mathbf{B}}\Vert\mathbf{M}-\mathbf{D}\mathbf{B}\Vert_F^2\quad\subject\quad\gamma_i\le t\ \for\  i=1\dots n_2,
\end{equation}
where $c$ is the number of atoms, $\gamma_i$'s count the number of non-zero elements in each sparsity code and $t$ is the sparsity constraint factor. This can be solved by the K-SVD algorithm\cite{Aharon06}. Here, feature $\mathbf{U}$ is the dictionary $\mathbf{D}$ and feature $\mathbf{V}$ corresponds to a similar solution using the transpose of the observation matrix as input. For implementation details, we set $c$ to $40$, $t$ to $40$ and used $10$ iterations for each subject.

As a visual illustration, two challenging cases are exhibited in Figure \ref{fig:Yale}. For subject $\#2$, it is clearly evident that PCPSM and PCPSFM outperform the best existing methods through the complete elimination of acquisition faults. More surprisingly, PCPSFM even manages to restore the flash in the pupils that is barely present in the side information. For subject $\#34$, PCPSM indubitably reconstructs a more vivid right eye than that from PCP which is only discernible. With that said, PCPSFM still prevails by uncovering more shadows, especially around the medial canthus of the right eye, and revealing a more distinct crease in the upper eyelid as well a more translucent iris. We further unmask the strength of PCPSM and PCPSFM by considering the stringent side information made of the average of 10 other subjects. Surprisingly, PCPSM and PCPSFM still manage to remove the noise, recovering an authentic image (Figures \ref{fig:Yale}(c.IV) and \ref{fig:Yale}(c.VII)). We also notice that PSSV, RPCAG, FRPCAG do not improve upon PCP as in simluation experiments. Thence, we will focus on comparisons with PCP, LRR, PCPF only.

\subsection{UV map completion}

We concern ourselves with the problem of completing the UV texture for each of a sequence of video frames. That is, we apply PCPSM and PCPSFM to a collection of incomplete textures lifted from the video. This parameter-free approach is advantageous to a statistical texture model such as 3D Morphable Model (3DMM)~\cite{Blanz03,Booth_2016_CVPR} by virtue of its difficulty in reconstructing unseen images captured 'in-the-wild' (any commercial camera in arbitrary conditions).

\subsubsection{Texture extraction}
Given a 2D image, we extract the UV texture by fitting 3DMM. Specifically, following \cite{booth20173d}, three parametric models are employed. There are a 3D shape model (\ref{equ:shape_model}), a texture model (\ref{equ:texture_model}) and a camera model (\ref{equ:camera_model}):
\begin{equation}
\mathcal{S}(\mathbf{p}) = \overline{\mathbf{s}} + \mathbf{U}_s \mathbf{p},
\label{equ:shape_model}
\end{equation}
\begin{equation}
\mathcal{T}(\boldsymbol{\lambda}) = \overline{\mathbf{t}} + \mathbf{U}_t \boldsymbol{\lambda},
\label{equ:texture_model}
\end{equation}
\begin{equation}
\mathcal{W}(\mathbf{p},\mathbf{c}) = \mathcal{P}(\mathcal{S}(\mathbf{p}),\mathbf{c}),
\label{equ:camera_model}
\end{equation}
where $\mathbf{p} \in \mathbb{R}^{n_s}, \boldsymbol{\lambda} \in \mathbb{R}^{n_t}$ and $\mathbf{c} \in \mathbb{R}^{n_c}$ are shape, texture and camera parameters to optimise; $\mathbf{U}_s \in \mathbb{R}^{3N \times n_s}$ and $\mathbf{U}_t \in \mathbb{R}^{3N \times n_t}$ are the shape and texture eigenbases respectively with $N$ the number of vertices in shape model; $\mathbf{\overline{s}} \in \mathbb{R}^{3N}$ and $\mathbf{\overline{t}} \in \mathbb{R}^{3N}$ are the means of shape and texture models correspondingly, learnt from 10000 face scans of different individuals\cite{Booth_2016_CVPR}; $\mathcal{P}(\mathbf{s},\mathbf{c}): \mathbb{R}^{3N} \rightarrow \mathbb{R}^{2N}$ is a perspective camera transformation function.

\begin{figure}[b!]
\begin{center}
\includegraphics[width=0.5\textwidth]{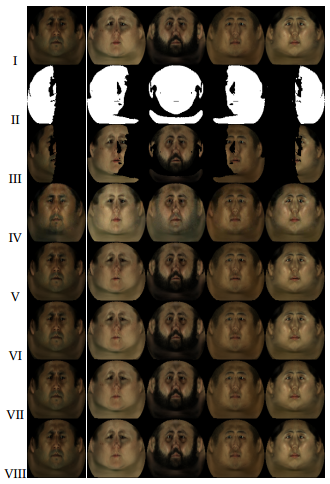}
\end{center}
\caption{(row I) original sequences; (row II) random masks; (row III) sample inputs;  (row IV) side information; (row V) PCP; (row VI) PCPSM; (row VII) LRR; (row VIII) PCPSFM.}
\label{fig:4DFAB}
\end{figure}

The complete cost function for 3DMM fitting is:
\begin{align}
\min_{\mathbf{p},\boldsymbol{\lambda},\mathbf{c}} 
&\| \mathbf{F}(\mathcal{W}(\mathbf{p},\mathbf{c})) - \mathcal{T}(\boldsymbol{\lambda}) \|^2 + 
\beta_l \| \mathcal{W}(\mathbf{p},\mathbf{c})) - \mathbf{s}_l \|^2\notag\\
& + \beta_s \| \mathbf{p} \|^2_{\mathbf{\Sigma}_{s}^{-1}} + \beta_t \| \boldsymbol{\lambda} \|^2_{\mathbf{\Sigma}_{t}^{-1}},
\label{equ:3dmm_fitting_final}
\end{align}
where $\mathbf{F}(\mathcal{W}(\mathbf{p},\mathbf{c}))$ denotes the operation of sampling the feature image onto the projected 2D locations. The second term is a landmark term with weighting $\beta_l$ used to accelerate in-the-wild 3DMM fitting, where 2D shape $\mathbf{s}_l$ is provided by\cite{bulat2017far}. The final two terms are regularisation terms to counter over-fitting, where $\mathbf{\Sigma}_{s}$ and $\mathbf{\Sigma}_{t}$ are diagonal matrices with the main diagonal being eigenvalues of the shape and texture models respectively. (\ref{equ:3dmm_fitting_final}) is solved by the Gauss-Newton optimisation framework (see\cite{booth20173d} for details).

\subsubsection{Quantitative evaluation}

We quantitatively evaluate the completed UV maps by our proposed methods on the 4DFAB dataset~\cite{cheng4dfab}. 4DFAB is the first 3D dynamic facial expression dataset designed for biometric applications, where 180 participants were invited to attend four sessions at different times. Hence, to complete UV maps for one session, we can leverage images from another session as the side information. For each of 5 randomly selected subjects, one dynamic sequence of 155 frames was randomly cut from the second session. After vectorisation, a $32556\times155$ observation matrix was formed. To produce UV masks of different poses, we rotate each face with different yaw and pitch angles. The yaw angle ranges from $-90^\circ$ to $90^\circ$ in steps of $6^\circ$, wheres the pitch angle is selected from $[ -10^\circ,-5^\circ,0^\circ,5^\circ,10^\circ]$. Therefore, for each subject, a set of 155 unique masks were generated. We also tiled one image of the same subject from the first session into a $32556\times155$ matrix as side information. $\mathbf{U}$ was provided by the left and right singular vectors of the original sequence while $\mathbf{V}$ was set to the identity.

From Figure \ref{fig:4DFAB}, we observe that (I) RPCA approaches can deal with cases when more than $50\%$ of the pixels are missing; (II) imperfect side information (shaved beard, removed earrings and different lights) still help with the recovery process. We record peak signal-to-noise ratios (PSNR) and structural similarity indices (SSIM) between the completed UV maps and the original map in Table \ref{table:psnr}. It is evident that with the assistance of side information much higher fidelity can be achieved. The use of imperfect side information nearly comes on a par with perfect features.

\begin{table}[t]
\begin{center}
\begin{tabular}{ |c|c|c|c|c|c|c| }
\hline
\multicolumn{2}{|c|}{Subject} & $\#1$ & $\#2$ & $\#3$ & $\#4$ & $\#5$ \\
\hline
PSNR&PCP&$35.99$&$26.75$&$32.65$&$31.33$&$29.10$\\
\cline{2-7}
(dB)&PCPSM&$\bm{39.56}$&$\bm{30.63}$&$\bm{34.66}$&$\bm{35.86}$&$\bm{32.80}$\\
\cline{2-7}
&LRR&$40.94$&$30.69$&$36.38$&$35.94$&$33.97$\\
\cline{2-7}
&PCPSFM&$\bm{41.48}$&$\bm{31.46}$&$\bm{37.29}$&$\bm{36.60}$&$\bm{34.80}$\\
\hline
SSIM&PCP&$0.973$&$0.922$&$0.962$&$0.956$&$0.949$\\
\cline{2-7}
&PCPSM&$\bm{0.98}$&$\bm{0.952}$&$\bm{0.969}$&$\bm{0.981}$&$\bm{0.973}$\\
\cline{2-7}
&LRR&$0.990$&$0.952$&$0.975$&$0.982$&$0.978$\\
\cline{2-7}
&PCPSFM&$\bm{0.991}$&$\bm{0.958}$&$\bm{0.980}$&$\bm{0.984}$&$\bm{0.981}$\\
\hline
\end{tabular}
\end{center}
\caption{Quantitative measures of UV completion by various algorithms on the 4DFAB dataset.}
\label{table:psnr}
\end{table}

\subsubsection{Generative adversarial networks}

More often than not, ground-truth $\mathbf{U}$, $\mathbf{V}$ are not accessible to us for in-the-wild videos. Learning methods such as (\ref{eq:KSVD}) must be leveraged to acquire $\mathbf{U}$ or $\mathbf{V}$. However, (\ref{eq:KSVD}) is not ideal: (I) it is not robust to errors; (II) it cannot handle missing values; (III) it requires exhaustive search of optimal parameters which vary from video to video; (IV) it only admits greedy solutions. On the other hand, we can use GAN to produce authentic pseudo ground-truth and then obtain $\mathbf{U}$ and $\mathbf{V}$ accordingly. Moreover, such completed sequence provides us good side information. For GAN, we employ the image-to-image conditional adversarial network\cite{isola2016image} (appropriately customised for UV map completion) to conduct UV completion. Details regarding the architecture and training of GAN can be found in the supplementary material.

\subsubsection{Qualitative demonstration}

To examine the ability of our proposed methods on in-the-wild images. We perform experiments on the 300VW dataset~\cite{300vw}. This dataset contains 114 in-the-wild videos that exhibit large variations in pose, expression, illumination, background, occlusion, and image quality. Each video shows exactly one person, and each frame is annotated with 68 facial landmarks. 
We performed 3DMM fitting on these videos and lifted one corresponding UV map for earch frame, where the visibility mask was produced by z-buffering based on the fitted mesh. The side information was generated by taking the average of the completed UVs from GAN. $\mathbf{U}$ and $\mathbf{V}$ were assigned to the singular vectors of the completed texture sequence from GAN.

\begin{figure*}[h!]
\begin{center}
\includegraphics[width=1\textwidth]{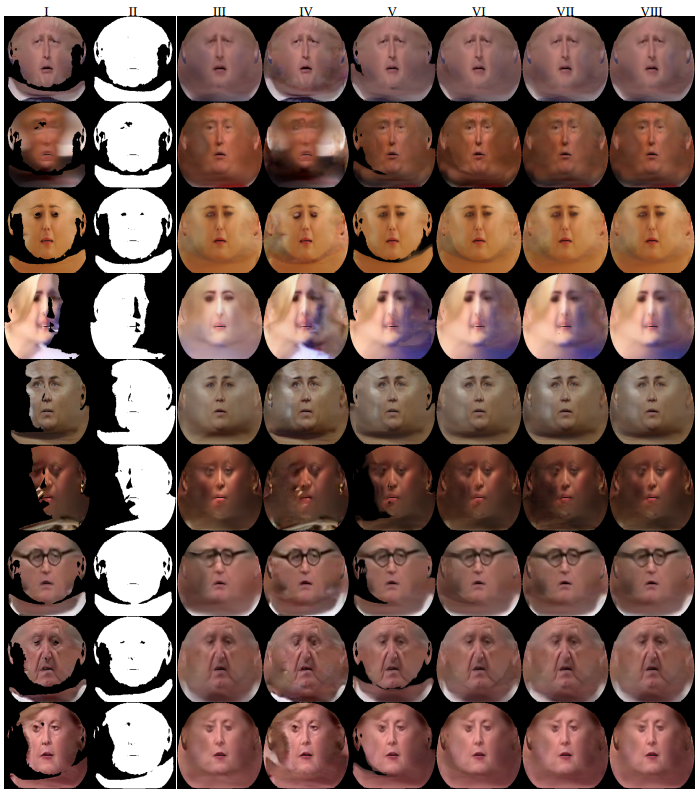}
\end{center}
\caption{(column I) sample frame; (column II) mask; (column III) side information; (column IV) GAN; (column V) PCP; (column VI) PCPSM; (column VII) LRR; (column VIII) PCPSFM.}
\label{fig:300vW}
\end{figure*}

\begin{figure*}[h!]
\begin{center}
\includegraphics[width=1\textwidth]{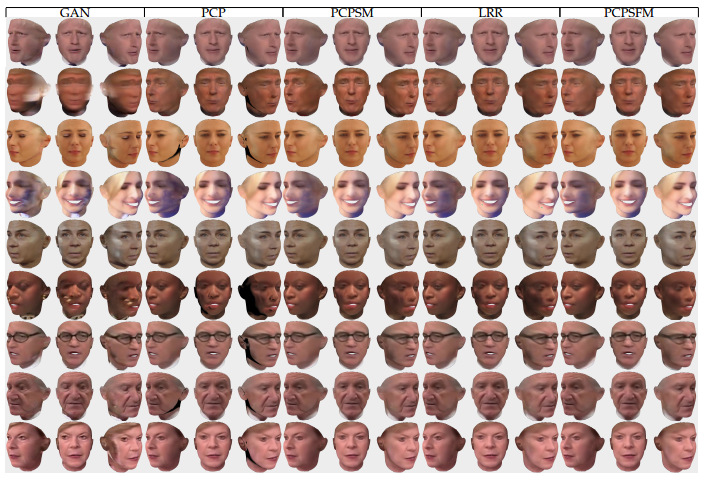}
\end{center}
\caption{2D face synthesis of three views ($-45^{\circ}$, $0^{\circ}$, $45^{\circ}$) from the completed UV maps by various methods.}
\label{fig:300vWShape}
\end{figure*}

We display results for one frame from each of 9 arbitrary videos in Figure \ref{fig:300vW}. As evident from the images, GAN alone has unavoidable drawbacks: (I) when the 3DMM fitting is not accurate, GAN is unable to correct such defects; (II) when the image itself contains errors, GAN is unable to remove them. On the other hand, PCP often fails to produce a complete UV. PCPSM always produces a completed UV texture, which is an improvement over PCP, but it makes boundaries undesirably visible. Visually, LRR and PCPSFM have the best performance, being able to produce good completed UVs for a large variety of poses, identities, lighting conditions and facial characteristics. This justifies the quality of subspaces and side information from GAN for use in the robust PCA framework. We also synthesise 2D faces of three different poses using the the completed UV maps in Figure \ref{fig:300vWShape}.

\subsection{Face recognition}

Face recognition is a crucial element of biometrics\cite{taigman2014deepface,sun2014deep,schroff2015facenet,parkhi2015deep,chen2017robust,ranjan2017l2,shekhar2017synthesis}. More specifically, to fully exploit the existing vast sources of in-the-wild data, one needs to recognise faces in a pose-invariant way. Modern approaches include pose-robust feature extraction, multi-view subspace learning, face synthesis, etc\cite{Ding16}. These often fall short of expectations either due to fundamental limitations or inability to fuse with other useful methods. For example, Generalized Multiview Analysis\cite{Sharma12} cannot take into account of pose normalisation\cite{Ding15} or deep neural network-based pose-robust feature extraction\cite{Kan14} and vice versa. It is thus fruitful to provide a framework where information from different perspectives can be fused together to deliver better prediction.

We quantitatively evaluate our proposed fusion methods by carrying out face recognition experiments on the completed UVs. The experiments are performed on three standard databases, \ie VGG, CFP and YTF. We perform both ablation as well as video-based face recognition experiments against known benchmarks.

\subsubsection{Datasets}

{\bf VGG} The VGG2 dataset\cite{cao2017vggface2} contains a training set of 8,631 identities (3,141,890 images) and a test set of 500 identities (169,396 images). VGG2 has large variations in pose, age, illumination, ethnicity and profession. To facilitate the evaluation of face matching across different poses, VGG2 also provides a face template list for each of some 368 subjects, which contains 2 front templates, 2 three-quarter templates and 2 profile templates. Each template includes 5 images. 

{\bf CFP} The CFP dataset\cite{sengupta2016frontal} consists of 500 subjects, each of which has 10 frontal and 4 profile images. As such, we define the evaluation protocol for frontal-frontal (FF) and frontal-profile (FP) face verification on 500 same-person pairs and 500 different-person pairs.

{\bf YTF} The YTF dataset\cite{wolf2011face} consists of $3,425$ videos of $1,595$ different people. The clip duration varies from $48$ frames to $6,070$ frames, with an average length of $181.3$ frames. We follow its \textit{unrestricted with labelled outside data} protocol and report the results on $5,000$ video pairs.

\subsubsection{Face Feature Embedding}
We use ResNet-27~\cite{zhang2016range,wen2016discriminative} for $512$-$d$ facial feature embedding with softmax loss (see Supplementary Material for more details). Figure~\ref{pic:setfacerecognition} illustrates the set-based face feature embedding used for face recognition and verification. After 3DMM fitting, we extract 3D face shapes and incomplete UV maps. Then, we utilise the proposed UV completion methods (GAN, PCP, PCPSM, LRR and PCPSFM) to derive compeleted UV maps. Frontal faces are synthesised from full UV maps and the 3D shapes through which $512$-$d$ features are obtained from the last fully connected layer of the feature embedding network. Finally, we calculate the feature centre as the feature description of this set of face images.

\begin{figure*}[b!]
\begin{center}
    \includegraphics[width=0.9\textwidth]{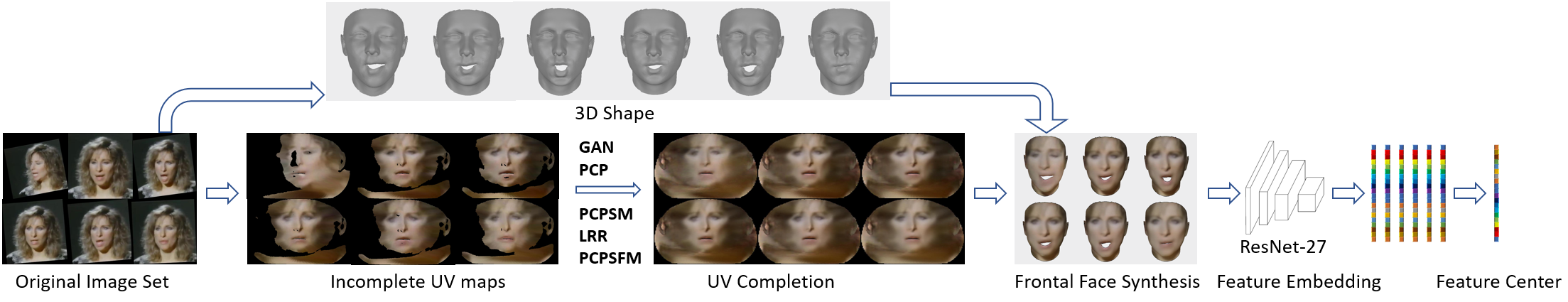}\\
\end{center}
\caption{The proposed pipeline for video-based face recognition. The 3DMM is fitted on the frames of the video and the in-completed UV maps are estimated.  The trained GAN is then used to provide a first estimate of the side information and the proposed methodology is applied to generate completed UV maps. The 3D model is again used to render the images in the frontal position. Deep neural network is used to extract features from all frames and the average of the features is used in order to represent the video.}
\label{pic:setfacerecognition}
\end{figure*}

\subsubsection{Ablation experiments}
We conduct face verification and identification experiments for the proposed methods on CFP and VGG datasets. Since the proposed methods are based on a sequence of images, we design our experiments for ablation study as follows: for experiments on VGG, we use the 368 identities from the pose template list, for each of which we divide the total 30 face images into two sets so that each set has 15 face images from frontal, three-quarter and profile templates; for experiments on CFP, we use all of the 500 identities, for each of which we divide the total 14 face images into two sets so that each set has 5 frontal face images and 2 profile face images. For face verification experiments, we construct $N$ positive set pairs and $N$ hard negative set pairs (nearest inter-class), where $N$ is 368 for the VGG dataset and 500 for the CFP dataset. For face identification experiments, one set of the face images from each identity is used as the gallery and the other set of the face images is taken to be the probe. So, for each probe, we predict its identity by searching for the nearest gallery in the feature space. Rank-1 accuracy is used here to assess the face identification performance.

\begin{figure*}[b]
\begin{center}
\includegraphics[width=0.9\linewidth]{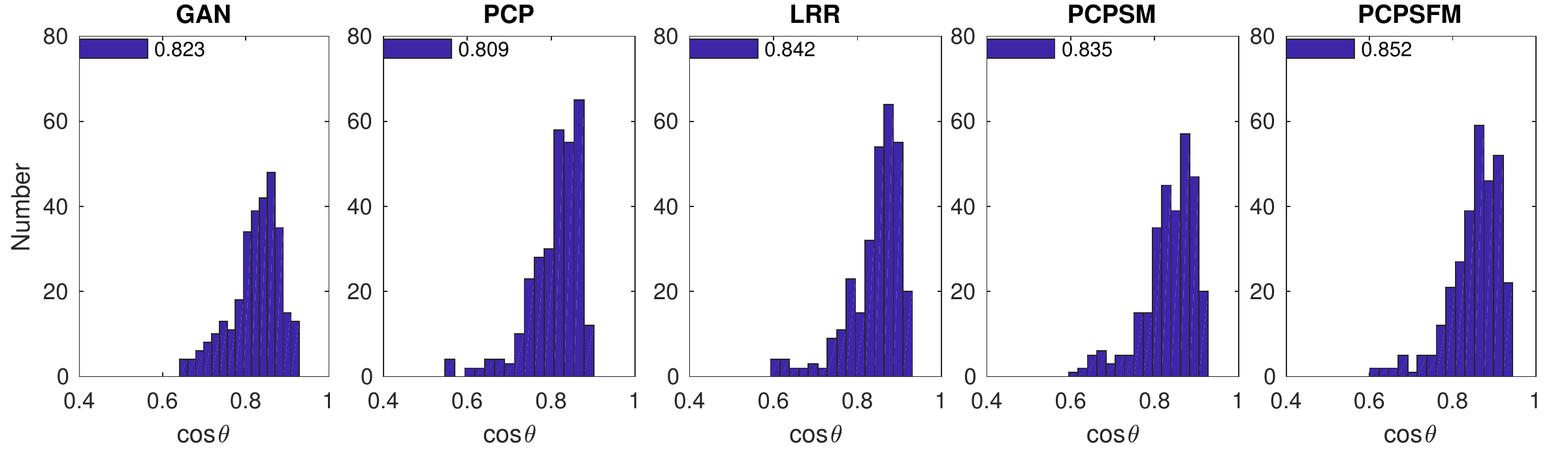}
\end{center}
   \caption{15-bin histograms of cos$\theta$ distributions produced from various methods. The legend in the figure indicates its mean.}
    \label{fig:costheta}
\end{figure*}

Detailed comparisons of the proposed methods against the baseline methods, PCP and LRR, are tabulated in Table \ref{tab:vgg2cfp}. For face verification experiments on the VGG dataset, we observe that feature subspace and side information from GAN (LRR and PCPSM) improve the vanilla PCP in terms of accuracy scores with a further boost in performance if both of them are considered together (PCPSFM). On the CFP dataset, the performance is slightly worse than that on VGG, but the improvement can still be made by exploiting features and side information obtained from GAN. A similar trend is seen for the face identification experiments. These results confirm the visual outcomes we see from the previous section. 

\begin{table}[h]
\small
\footnotesize
\begin{center}
\begin{tabular}{@{\hskip .5mm}l@{\hskip 1.5mm}|c@{\hskip 1.5mm}|c@{\hskip 1.5mm}|c@{\hskip 1.5mm}|c@{\hskip .5mm}}
\hline
Method            & \multicolumn{2}{c}{VGG} & \multicolumn{2}{c}{CFP} \\ \hline
Accuracy ($\%$)     & Ver. & Id. (Rank-1) & Ver. & Id. (Rank-1) \\ \hline
PCP    & 97.15 & 95.38 & 95.30 &  94.60\\
PCPSM  & 98.78 & 97.28 & 97.80 &  97.00\\
LRR    & 99.05 & 97.55 & 98.10 &  97.20\\
PCPSFM &$\bm{99.32}$ & $\bm{98.10}$ & $\bm{98.40}$ & $\bm{97.60}$\\
\hline
\end{tabular}
\end{center}
\caption{Face verification and identification results on VGG and CFP datasets.}
\label{tab:vgg2cfp}
\end{table}

\subsubsection{Video-based face recognition}

\begin{table}[t]
\begin{center}
\begin{tabular}{c|c|c}
\hline
Methods                        & Images          &YTF ($\%$)  \\
\hline\hline
DeepID~\cite{sun2014deep}      &  0.2M     &93.20\\
VGG Face~\cite{parkhi2015deep} & 2.6M      &$\bm{97.30}$\\
Deep Face~\cite{taigman2014deepface}&4M   &91.40\\
FaceNet~\cite{schroff2015facenet}&200M    &95.10\\
Center Loss~\cite{wen2016discriminative}&0.7M     &94.9 \\
Range Loss~\cite{zhang2016range}&1.5M    &93.70\\
Sphere Loss~\cite{liu2017sphereface} &0.5M &95.0\\
Marginal Loss~\cite{deng2017marginal} &4M  &95.98\\
\hline
GAN                    & 3.1M  & 95.68    \\
PCP                    & 3.1M  & 95.25    \\
PCPSM                  & 3.1M  & 95.94    \\
LRR                    & 3.1M  & 96.32    \\
PCPSFM                 & 3.1M  &$\bm{96.58}$\\
\hline
\hline
\end{tabular}
\end{center}
\caption{Verification performance of different methods on the YTF dataset}\label{table:YTF}
\end{table}

Face recognition on in-the-wild videos is a challenging task because of the rich pose variations. Videos from the YTF (YouTube Face) dataset also suffer from low resolution and serious compression artifacts which make the problem even worse. To conduct experiments on YTF, we follow the "restricted" protocol\cite{wolf2011face}, which forbids the use of subject identity labels during training. Therefore, we remove the 230 overlapping identities from VGG and re-train our GAN model for video-based face recognition experiments. For each method, individual frontal face image projection from the completed UV is produced for facial feature extraction. For each video, the final feature representation is a $512$-$d$ feature centre of all the frames.

We compare the face verification performance of the proposed methods with state-of-the-art approaches on the YTF dataset. The mean verification rates for best-performing deep learning methods are listed in Table \ref{table:YTF}. We see that our GAN model alone is among the best reported architectures and it outperforms the classical PCP. Nonetheless, their fusion (PCPSM, LRR and PCPSFM) is superior to either of them. More specifically, PCPSM improves PCP and GAN by $0.69\%$ and $0.26\%$ respectively. For LRR, the improvements are $1.07\%$ and $0.64\%$. PCPSFM has the most gain of $1.33\%$ over PCP and $0.9\%$ over LRR. Corresponding ROC curves are plotted in Figure~\ref{fig:ytf_rocs}, the proposed PCPSFM obviously improves the accuracy of video-based face recognition. We randomly select one sample video (Barbra Streisand) and plot the distributions of the cosine distance between each face frame and the video feature centre in Figure \ref{fig:costheta}. As evident from the plots, the mean of PCPSFM is the highest, which indicates that the proposed method is able to remove the noise of the each video frame and generate new frontal faces with low intra-variance. These findings are in agreement with our recognition results above and confirm the advantages of combing GAN and robust PCA for the application of face recognition.

\begin{figure}[t]
\centering
\includegraphics[width=0.8\linewidth]{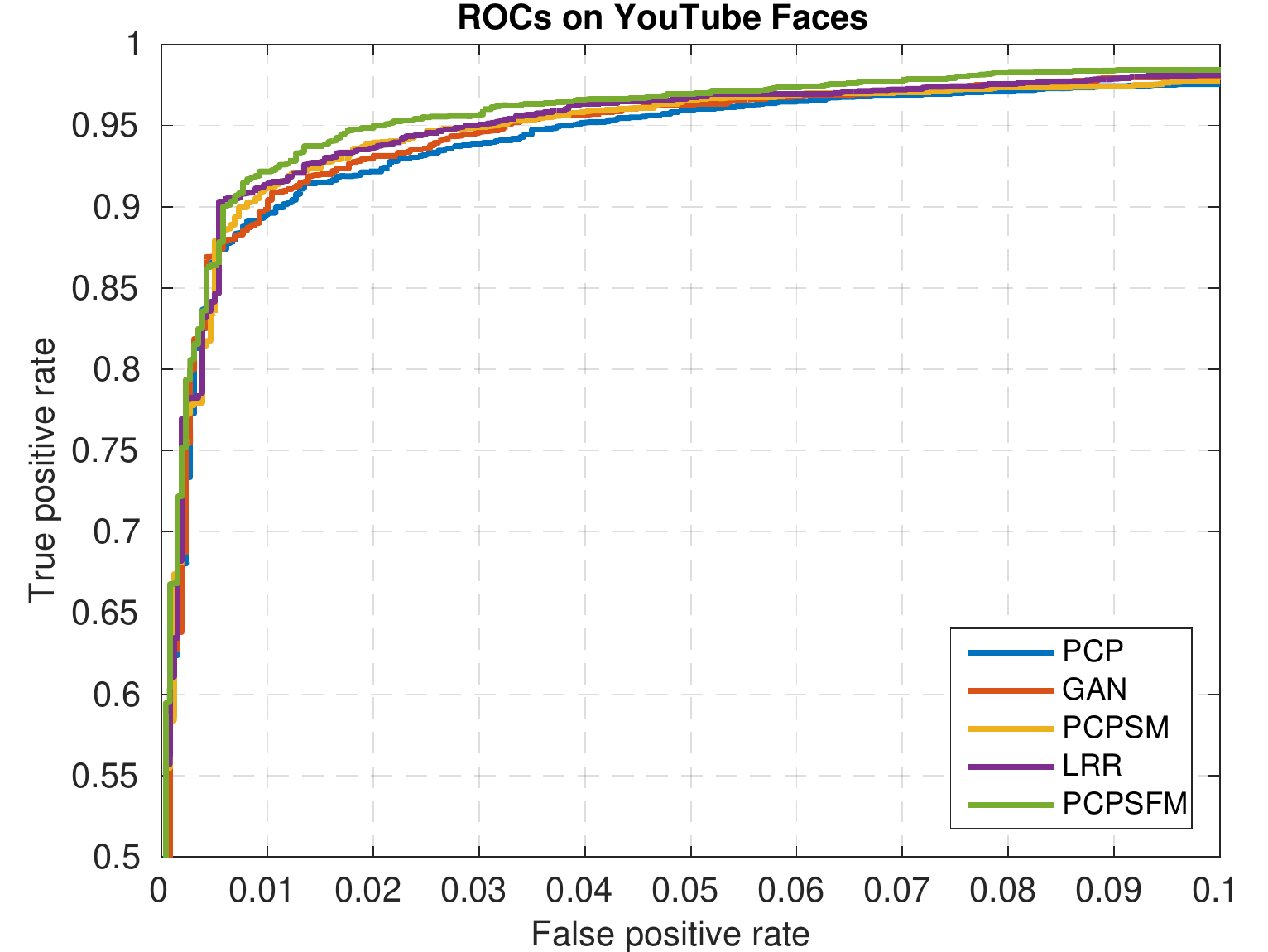}
\caption{ROC curves of the proposed methods on the YouTube Faces database under the ``restricted'' protocol.}
\label{fig:ytf_rocs}
\end{figure}

\section{Conclusion}
In this paper, we study the problem of robust principal component analysis with features acting as side information in the presence missing values. For the application domain of UV completion, we also propose the use of generative adversarial networks to extract side information and subspaces, which, to the best of our knowledge, is the first occasion  where RPCA and GAN have been fused. We also prove the convergence of ADMM for our convex objective. Through synthetic and real-world experiments, we demonstrate the advantages of side information. In virtue of in-the-wild data, we corroborate our fusion strategy. Finally, face recognition benchmarks accredit the efficacy of our proposed approach over state-of-art methods.  

\section{Supplementary Material}
\subsection{Generative adversarial networks}

For GAN, we employ the image-to-image conditional adversarial network\cite{isola2016image} to conduct UV completion. As is shown in Figure~\ref{fig:UVGAN_framework}, there are two main components in the image-to-image conditional GAN: a generator module and a discriminator module.

\begin{figure}[h]
\centering
\includegraphics[width=0.8\linewidth]{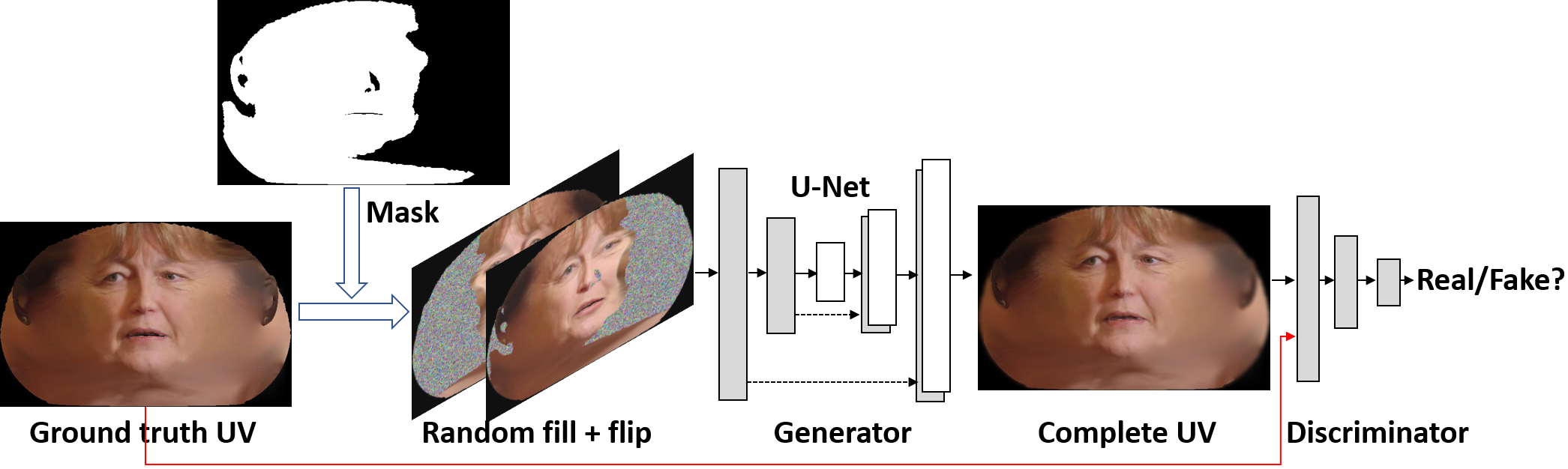}
\caption{Our GAN framework consists of one generator and one discriminator. The generator takes the incomplete UV map as input and outputs the full UV map. The discriminator is learnt to validate the genuineness of the synthesised UV texture. Note that only the generator is used at the testing stage.}
\label{fig:UVGAN_framework}
\end{figure}

{\bf Generator Module} Given incomplete UV texture input, the generator $G$ works as an auto-encoder to construct completed instances. We adopt the pixel-wise $l_{1}$ norm as the reconstruction loss:
\vspace{-2mm}
\begin{equation}
L_{gen}=\frac{1}{W\times H}\sum_{i=1}^{W}\sum_{j=1}^{H}\left | I_{i,j}-I_{i,j}^{*} \right |,
\label{eq:genloss}
\end{equation}
where $I_{i,j}$ is the estimated UV texture and $I_{i,j}^{*}$ is the ground truth texture of width $W$ and height $H$. To preserve the image information in the original resolution, we follow the encoder-decoder design in~\cite{isola2016image}, where skip connections between mirrored layers in the encoder and decoder stacks are made. We first fill the incomplete UV texture with random noise and then concatenate it with its mirror image as the generator input. Since the face is not exactly symmetric, we have avoided using symmetry loss as in\cite{Huang_2017_ICCV}. Also, unlike the original GAN model\cite{goodfellow2014generative} which is initialised from a noise vector, the hidden representations obtained from our encoder capture more variations as well as relationships between invisible and visible regions, and thus help the decoder fill up the missing regions.

{\bf Discriminator Module} Although the previous generator module can fill missing pixels with small reconstruction errors, it does not guarantee the output textures to be visually realistic and informative. With only the pixel-wise $l_{1}$ reconstruction loss, the UV completion results would be quite blurry and missing important details. To improve the quality of synthetic images and encourage more photo-realistic results, we adopt a discriminator module $D$ to distinguish real and fake UVs. The adversarial loss, which is a reflection of how the generator could maximally fool the discriminator and how well the discriminator could distinguish between real and fake UVs, is defined as
\vspace{-2mm}
\begin{align}
L_{adv} = \min_G\max_D\ &\mathbb{E}_{\mathbf{x}\sim p_{d}(\mathbf{x}),\mathbf{y}\sim p_{d}(\mathbf{y})}\left [ \log D(\mathbf{x,y}) \right ] + \notag\\ 
&\mathbb{E}_{\mathbf{x}\sim p_{d}(\mathbf{x}),\mathbf{z}\sim p_{z}(\mathbf{z})}\left [\log( 1 - D(\mathbf{x},G(\mathbf{x,z}))) \right ],
\label{eq:adver}
\end{align}
where $ p_{z}(\mathbf{z})$, $p_{d}(\mathbf{x})$ and $p_{d}(\mathbf{y})$ represent the distributions (Gaussian) of the noise variable $\mathbf{z}$, the partial UV texture $\mathbf{x}$ and the full UV texture $\mathbf{y}$ respectively.

{\bf Objective Function} The final loss function for the proposed UV-GAN is a weighted sum of generator loss and discriminator loss:
\vspace{-2mm}
\begin{equation}
L=L_{gen}+ \lambda L_{adv\_g}.
\end{equation}\label{eq:loss}
$\lambda$ is the weight to balance generator loss and discriminator loss. $\lambda$ is set to $0.01$ empirically.

{\bf Architecture} The same network architecture of \cite{isola2016image} is adopted here\footnote{https://github.com/phillipi/pix2pix}. The encoder unit consists of convolution, batch normalisation and ReLU, and the decoder unit consists of deconvolution, batch normalisation and ReLU. The convolution involves $4\times4$ spatial filters applied with stride $2$. Convolution in the encoder and the discriminator is also downsampled by a factor of $2$, while in the decoder it is upsampled by a factor of $2$. 

As shown in Figure \ref{fig:subfig:Generator}, the generator utilises the U-Net~\cite{ronneberger2015u} architecture which has skip connections between $i^{th}$ layer in the encoder and the $(n-i)^{th}$ layer in the decoder, where $n$ is the total number of layers. These skip connections concatenate activations from the $i^{th}$ layer to the $(n-i)^{th}$ layer. Note that batch normalisation is not applied to the first Conv64 layer in the encoder. All ReLUs in the encoder are leaky, with slope 0.2, whereas ReLUs in the decoder are not leaky.

For the discriminator, we use the $70\times70$ PatchGAN as in \cite{isola2016image}. In Figure \ref{fig:subfig:Discriminator}, we depict the architecture of the discriminator. Again, batch normalisation is not applied to the first Conv64 layer. However, all ReLUs are now leaky, with slope 0.2. We have also set the stride of the last two encoder modules to $1$. 

\begin{figure}[h!]
  \centering
  \subfigure[Generator architecture]{
    \label{fig:subfig:Generator} 
    \includegraphics[width=0.45\textwidth]{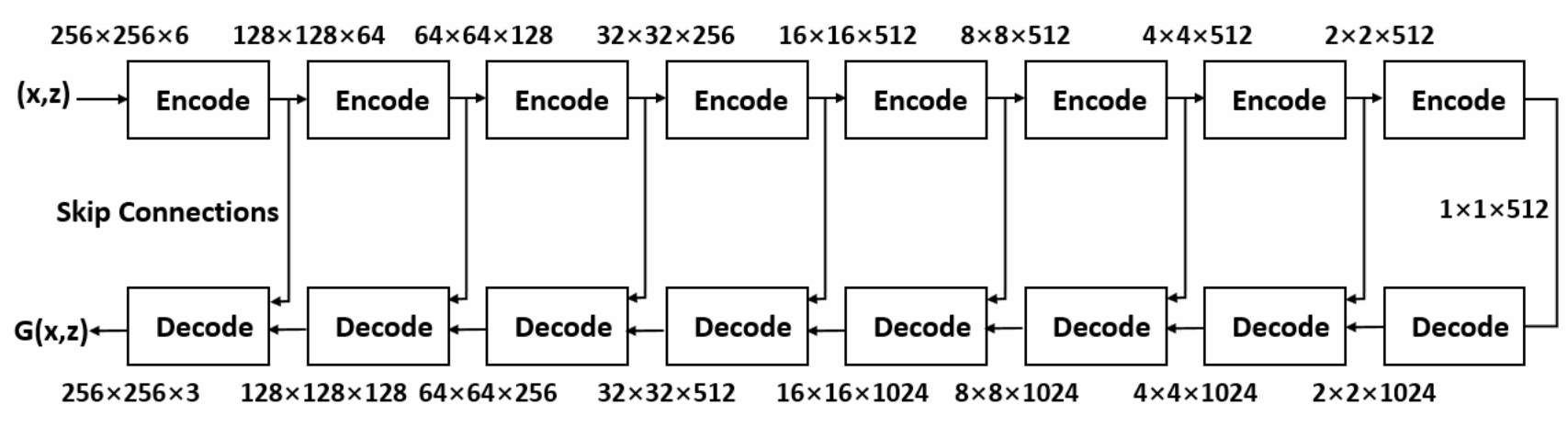}}
 \subfigure[Discriminator architecture]{
    \label{fig:subfig:Discriminator} 
    \includegraphics[width=0.45\textwidth]{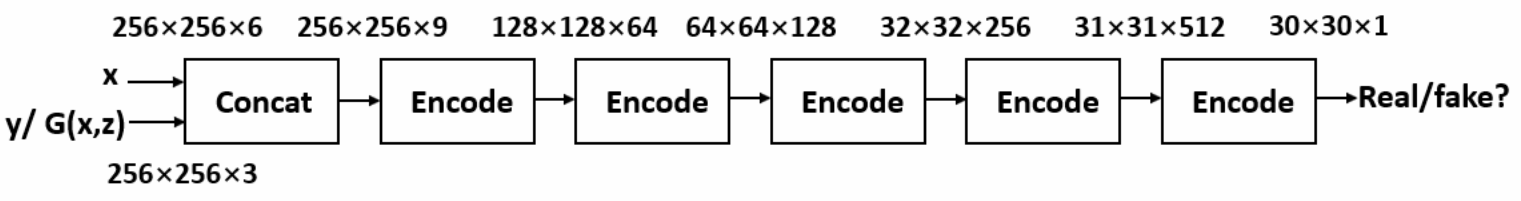}}
 \vspace{-2mm}
 \caption{The encoder unit consists of convolution, batch normalisation and ReLU, and the decoder unit consists of de-convolution, batch normalisation and ReLU. The input to the generator is the occluded UV map $x$ that is filled with random noise $z$ and concatenated with its flipped image. The input to the discriminator is the original input $x$ and either the ground-truth UV map $y$ or the generated UV map $G(x,z)$.}
\label{fig:uvgan}
\end{figure}

{\bf Training}
We train our networks from scratch by initialising the weights from a Gaussian distribution with zero mean and $0.02$ standard deviation. In order to train our UV completion model by pair-wise image data, we make use of both under-controlled and in-the-wild UV datasets. For the under-controlled UV data, we randomly select 180 subjects from the 4DFAB dataset\cite{cheng4dfab}. For the in-the-wild UV data, we employ the pseudo-complete UVs from the UMD video dataset\cite{bansal2017dosanddonts} via Poisson blending\cite{perez2003poisson}. We have meticulously chosen videos with large pose variations such that coverage of different poses is adequate. In the end, we have a combined UV dataset of 1,892 identities with 5,638 unique UV maps.

\subsection{Deep face feature embedding networks}
As shown in Figure~\ref{pic:res27}, we use ResNet-27~\cite{zhang2016range,wen2016discriminative} for $512$-$d$ facial feature embedding with softmax loss. The sizes of all the convolutional filters are $3\times3$ with stride $1$. And we set the kernel size of max-pooling to $2\times2$ with stride $2$. The network is initialised from Gaussian distribution and trained on the VGG training set (c3.1 million images) under the supervisory signals of soft-max. After an initial learning rate of 0.1, we successively contract it by a factor of 10 at the $6^{th}$, $14^{th}$, $22^{th}$ and $30^{th}$ epoch. We train the network in parallel on four GPUs so the overall batch size is $128\times4$. The input face size of the network is $112\times112$ pixels.

\begin{figure*}[hb!]
\begin{center}
    \includegraphics[width=\textwidth]{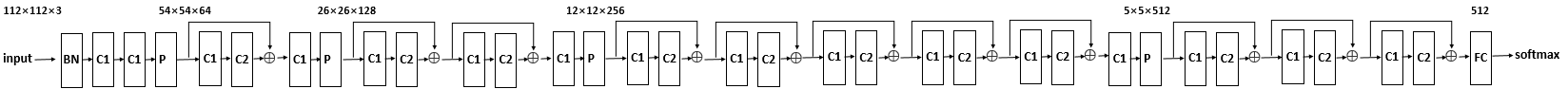}\\
\end{center}
\caption{Residual Network structure for facial feature embedding. C1 consists of convolution, batch normalisation and ReLU. C2 consists of convolution and batch normalisation. BN is the batch normalisation layer. P is the max-pooling layer. And FC is the fully connected layer.}
\label{pic:res27}
\end{figure*}


%

\ifCLASSOPTIONcaptionsoff
  \newpage
\fi




{
\bibliographystyle{IEEEtran}
\bibliography{egbib}
}

\end{document}